\documentclass[journal]{IEEEtran}

\usepackage[T1]{fontenc}
\usepackage[utf8]{inputenc}
\usepackage{amsmath}
\usepackage{graphicx}
\usepackage{stfloats}
\usepackage{flafter}
\usepackage{booktabs}
\usepackage{cite}
\usepackage[hidelinks]{hyperref}

\fnbelowfloat
\newcommand{\tablebodyformat}{\normalfont\scriptsize\setlength{\tabcolsep}{0pt}\renewcommand{\arraystretch}{1.10}}
\newcommand{\exhibitnotes}[2]{\par\smallskip\noindent\begin{minipage}{\linewidth} \normalfont#1\setlength{\parindent}{0pt}\textit{Notes:} #2 \end{minipage}\par}
\newcommand{\figurenotes}[1]{\exhibitnotes{\footnotesize}{#1}}
\newcommand{\tablenotes}[1]{\exhibitnotes{\scriptsize}{#1}}

\newtheorem{proposition}{Proposition}

\title{Coincidence-Maximizing Customer Clustering Using Smart Meter Data}

\author{Ramiro H. G\'alvez, Gabriel Martos Venturini, and Iv\'an Mondrzak\thanks{The authors are with Universidad Torcuato Di Tella, Ciudad de Buenos Aires, Argentina. Corresponding authors: Ramiro H. G\'alvez (e-mail: rgalvez@utdt.edu) and Gabriel Martos Venturini (e-mail: gmartos@utdt.edu).}}

\begin{document}
\maketitle
\raggedbottom
\clubpenalty=10000
\widowpenalty=10000
\displaywidowpenalty=10000

\begin{abstract}
Electricity distribution planning depends on peak-demand coincidence within customer groups, yet similarity-based clustering of smart-meter data does not optimize it directly. We formulate customer grouping to maximize the average of group coincidence factors, weighted by each group's daily share of summed individual peak demand. We prove that jointly optimizing memberships and group peak times is strongly NP-hard even for two groups. We therefore develop a fixed-peak refinement procedure applicable to any feasible partition. It alternates between identifying group peak times and exactly reassigning customers under minimum group-size constraints. Each polynomial-time reassignment maximizes a lower bound tight at the current partition, so accepted updates strictly improve weighted coincidence. We initialize refinement by clustering individual peak-time distributions and evaluate the method on four public smart-meter panels (27,654 customers) using chronological holdouts and 200 paired replicates with complete refitting. Refinement consistently improves weighted coincidence during fitting and reduces sensitivity to initialization. On held-out days, the method outperforms random grouping and five standalone benchmarks, with mean paired gains of 1.08--3.48 percentage points over the strongest benchmark. Requiring nearly equal group sizes lowers test weighted coincidence only modestly. The groups retain recognizable peak-time patterns over time, suggesting applications in peak-aware customer aggregation.
\end{abstract}

\begin{IEEEkeywords}
Coincidence factor, constrained clustering, customer segmentation, electricity demand, peak demand, smart meters.
\end{IEEEkeywords}

\section{Introduction}
\label{sec:intro}

\IEEEPARstart{D}{istribution} feeders and transformers must accommodate their customers' coincident peak demand, which depends on both the magnitudes and timing of individual loads \cite{dickert2010residential,sun2019peak}. This dependence is captured by the coincidence factor (CF), the ratio of a group's aggregate peak to the sum of its members' individual peaks \cite{bary1945coincidence,dickert2010residential,motlagh2019clustering}. The CF matters because two groups with similar sums of individual peak demand can reach very different aggregate peaks: the group whose customers tend to peak at the same time reaches the higher one \cite{sun2016diversified,roberts2019apartments}. This distinction is becoming increasingly relevant as heat pumps and electric vehicles introduce high-power loads that may concentrate demand at particular times \cite{love2017heatpump,bollerslev2022coincidence,poore2024newloads}. Interval-level observations from advanced metering infrastructure (AMI) now make it possible to identify recurring overlap in customers' high-demand periods \cite{wang2019smartmeterreview}. Combined with network connectivity and asset information, groups formed around these patterns could help characterize the mix of peak-related demand behaviors among customers served by individual distribution assets.

Most existing smart-meter clustering methods rely on similarity in average daily profiles \cite{mcloughlin2015clustering,yilmaz2019comparison}, behavioral features \cite{haben2016clustering,alotaibi2016features}, or temporal-dependence summaries \cite{alonso2020quantile}. Other approaches accommodate shifts in load timing through elastic dissimilarities such as dynamic time warping (DTW) \cite{yerbury2025comparing}, while more recent studies address scalability through distributed auto-clustering \cite{Vahedi2023} and anchor-graph-based subspace clustering \cite{Lin2026}. Although these methods capture useful aspects of consumption and offer different computational strategies, they do not directly favor high coincidence in synchronized load observations. Average profiles, for instance, can obscure day-to-day variation in peak timing \cite{mcloughlin2015clustering,pullinger2024variation}. A separate strand of work focuses on peaks directly, examining evening demand and peak-time variability \cite{dent2012flexibility}, contributions to system peaks \cite{azaza2017responsibility}, and behavior within predefined peak periods \cite{satremeloy2020peak}. Yuan et al.\ \cite{yuan2020systempeak}, for example, use individual peak-time distributions to classify contributions to system peak demand, while constructing load-pattern clusters from seasonal average profiles. Although this literature establishes the relevance of peak-related information, CF has mostly been used to characterize demand diversity or to evaluate and select among partitions generated by other clustering criteria \cite{roberts2019apartments,motlagh2019clustering}, rather than being optimized directly.

To address this gap, we formulate customer clustering as the maximization of a peak-demand-weighted CF. This criterion weights each group's daily CF by its share of the population's summed individual peak demand. Optimizing this objective is challenging, however, because customer memberships determine when aggregate group loads peak, while these peak times affect which assignments yield higher coincidence. We show that this joint optimization is strongly NP-hard even with two groups. Motivated by this difficulty, we develop a fixed-peak refinement procedure (R) that starts from any feasible partition and enforces prescribed minimum group sizes. The procedure alternates between identifying daily aggregate group peak times and optimizing customer assignments while those times remain fixed. We show that fixing the peak times yields a lower bound on weighted CF that is tight at the current partition. Maximizing this bound over feasible assignments can be solved exactly in polynomial time, and every accepted improvement therefore strictly increases the weighted CF objective. We initialize R with an interpretable partition obtained by clustering customers' empirical distributions of daily peak times using Euclidean $K$-means (P). Combining this initialization with refinement yields the proposed P+R method.

We evaluate P+R on four public smart-meter panels comprising 27,654 customers. To distinguish improvements achieved during fitting from coincidence maintained on later observations, we construct groups using an earlier period and evaluate them on subsequent days without changing customer memberships \cite{ullmann2022validation}. We further assess robustness to changes in customer composition and sampled days through 200 paired resampling replicates with complete refitting. The results show that refinement consistently improves weighted CF during fitting and reduces sensitivity to initialization. These benefits extend to later observations, where P+R achieves higher coincidence than random grouping and standalone clustering benchmarks, with refinement accounting for much of the advantage. The resulting groups also retain recognizable peak-time patterns, while nearly equal group sizes impose only modest losses in test weighted CF.

The paper makes three main contributions:
\begin{enumerate}
\item We formulate customer grouping as the direct maximization of peak-demand-weighted CF and prove that jointly optimizing memberships and group peak times is strongly NP-hard.
\item We develop a fixed-peak refinement procedure applicable to any feasible partition. Its reassignment problem can be solved exactly in polynomial time under minimum group-size constraints, with every accepted update strictly increasing weighted CF.
\item We evaluate the method on four smart-meter panels using chronological holdouts and paired refitting. The results demonstrate improvements from refinement during fitting, higher out-of-time coincidence than standalone clustering benchmarks, persistent peak-time patterns, and modest losses under nearly equal group-size constraints.
\end{enumerate}

The remainder of the paper is organized as follows. Section~\ref{sec:method} develops the objective and proposed method, Section~\ref{sec:experimental} describes the data and evaluation framework, Section~\ref{sec:results} presents the empirical results, and Section~\ref{sec:conclusions} concludes.

\section{Coincidence-oriented customer clustering}
\label{sec:method}

\subsection{Peak-Demand-Weighted Coincidence Objective}
\label{sec:objective}

Let $\mathcal D$ denote the fitting period, let $\ell_{i,d}(t)\geq0$ denote the load of customer $i$ on day $d$ at intraday interval $t\in\mathcal T=\{1,\ldots,m\}$, and let $\mathcal C=\{C_1,\ldots,C_K\}$ denote a partition of the $N$ customers into $K$ labeled groups. For each customer, define its daily peak as $b_{i,d}=\max_{t\in\mathcal T}\ell_{i,d}(t)$, and let $B_d=\sum_{i=1}^{N}b_{i,d}$ denote the sum of individual peaks on day $d$. For any group with $\sum_{i\in C_k}b_{i,d}>0$, the conventional daily CF compares the group's aggregate peak with the sum of its members' individual peaks \cite{bary1945coincidence,dickert2010residential,motlagh2019clustering}:
\begin{equation}
\mathrm{CF}_{k,d}(\mathcal C)
=
\frac{\max_{t\in\mathcal T}\sum_{i\in C_k}\ell_{i,d}(t)}
     {\sum_{i\in C_k} b_{i,d}}.
\label{eq:group_cf}
\end{equation}

While \eqref{eq:group_cf} measures coincidence within an individual group, evaluating a partition requires combining the CFs of all groups. A simple average would give equal weight to groups with very different peak-demand totals and could favor singleton groups---whose CF equals one whenever their demand is positive. To account for these differences, we weight each group's daily CF by its members' share of $B_d$, so that its contribution reflects their summed individual peak demand. For a day with $B_d>0$, the resulting peak-demand-weighted CF is
\begin{equation}
\mathrm{CF}^{\mathrm w}_d(\mathcal C)
=
\sum_{k=1}^{K}
\frac{\sum_{i\in C_k} b_{i,d}}{B_d}
\,\mathrm{CF}_{k,d}(\mathcal C),
\label{eq:weighted_cf}
\end{equation}

where groups with zero summed individual peaks contribute zero.

Substituting \eqref{eq:group_cf} into \eqref{eq:weighted_cf} gives
\begin{equation}
\mathrm{CF}^{\mathrm w}_d(\mathcal C)
=
\frac{1}{B_d}
\sum_{k=1}^{K}
\max_{t\in\mathcal T}
\sum_{i\in C_k}\ell_{i,d}(t).
\label{eq:daily_score}
\end{equation}
This expression shows that daily weighted CF measures the share of summed individual peak demand realized at the groups' respective aggregate peaks. From an engineering perspective, the weighting is meaningful for customer aggregation because groups representing larger amounts of peak demand should have greater influence on the overall measure than those representing smaller amounts. Note, however, that group peaks may occur at different times, so the objective characterizes coincidence within the groups of a partition rather than across the population as a whole.\footnote{When $K=1$, $\mathrm{CF}^{\mathrm w}_d(\mathcal C)$ reduces to the conventional daily CF of the entire customer population.} From a mathematical perspective, the weighting is convenient because it eliminates the group-specific CF denominators, and since $B_d$ is independent of the partition, maximizing daily weighted CF is equivalent to maximizing the sum of aggregate group peaks.

Consistent with previous multi-day applications of the CF \cite{guan2016energyplanning}, we summarize coincidence across days by averaging the daily scores over the positive-demand days $\mathcal D^{+}=\{d\in\mathcal D:B_d>0\}$:
\begin{equation}
\overline{\mathrm{CF}}^{\mathrm w}(\mathcal C;\mathcal D)
=
\frac{1}{|\mathcal D^{+}|}
\sum_{d\in\mathcal D^{+}}
\frac{1}{B_d}
\sum_{k=1}^{K}
\max_{t\in\mathcal T}
\sum_{i\in C_k}\ell_{i,d}(t).
\label{eq:period_score}
\end{equation}
We adopt $\overline{\mathrm{CF}}^{\mathrm w}(\mathcal C;\mathcal D)$ as the clustering objective because it rewards high within-group coincidence averaged across days, while weighting each group by the peak demand it represents. Since the objective alone does not prevent very small groups, we also let the intended aggregation task determine a minimum size $L_k$ for each group, and require $|C_k|\geq L_k$, with integer $L_k\geq1$ and $\sum_{k=1}^{K}L_k\leq N$ to ensure feasibility. For a given $K$ and given minimum sizes, the resulting optimization problem is:

\begin{equation}
\begin{aligned}
\max_{\mathcal C}\quad
& \overline{\mathrm{CF}}^{\mathrm w}(\mathcal C;\mathcal D) \\
\mathrm{s.t.}\quad
& |C_k|\geq L_k,\qquad k=1,\ldots,K.
\end{aligned}
\label{eq:global_problem}
\end{equation}

The objective in \eqref{eq:global_problem} has several useful properties. On every positive-demand day $d\in\mathcal D^{+}$, $0<\mathrm{CF}^{\mathrm w}_d(\mathcal C)\leq1$, so the period average also satisfies $0<\overline{\mathrm{CF}}^{\mathrm w}(\mathcal C;\mathcal D)\leq1$. Moreover, replacing $\ell_{i,d}(t)$ by $c_d\,\ell_{i,d}(t)$ with $c_d>0$ multiplies both the numerator of \eqref{eq:daily_score} and $B_d$ by $c_d$, so $\mathrm{CF}^{\mathrm w}_d$ and $\overline{\mathrm{CF}}^{\mathrm w}$ are unchanged, whereas customer-specific rescalings $c_i\,\ell_{i,d}(t)$ generally alter them. Thus, the objective captures temporal coincidence while accounting for differences in relative demand magnitudes. An important consideration is its dependence on the number of groups. Splitting a group cannot decrease the sum of aggregate group peaks on any day, so $\overline{\mathrm{CF}}^{\mathrm w}(\mathcal C;\mathcal D)$ is nondecreasing under nested refinements and reaches one when every customer forms a singleton group. Consequently, clustering methods should be compared at the same $K$ and under the same minimum group-size constraints, rather than using the objective alone to select $K$.\footnote{This monotonicity is not guaranteed for partitions fitted independently at different values of $K$, since those partitions need not be nested.}

\begin{figure}[!t]
\centering
\includegraphics[width=\columnwidth]{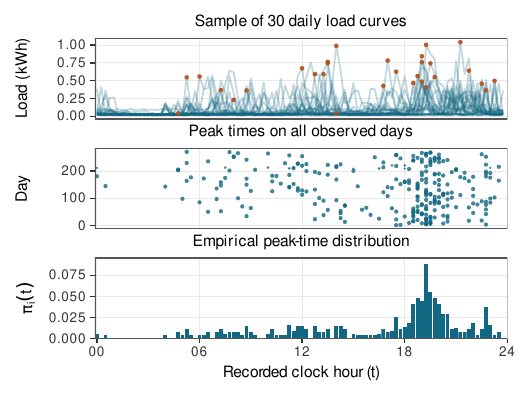}
\caption{Construction of an individual peak-time distribution.}
\label{fig:peak_representation}
\figurenotes{For a CKW customer, the top panel shows 30 daily load curves with peaks marked in orange, while the lower panels summarize peak times over 271 days.}
\end{figure}

\subsection{Global Optimization and Computational Difficulty}
\label{sec:global_problem}

Problem~\eqref{eq:global_problem} is challenging because group memberships and aggregate peak times are jointly determined. Changing a group's membership alters its aggregate load curve and may shift its daily peak, affecting which assignments yield high coincidence. Although the problem admits a mixed-integer linear formulation, finding a global optimum remains computationally difficult, as the following result establishes.

\begin{proposition}
\label{prop:hardness}
The coincidence-oriented clustering problem in \eqref{eq:global_problem} is strongly NP-hard even with $K=2$, two intraday intervals, binary loads, and $L_1=L_2=1$, when the number of observed days is part of the input.
\end{proposition}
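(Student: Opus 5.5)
The plan is a polynomial reduction from unweighted Max-Cut, which is NP-hard. Because the instances we build contain only the numbers $0$ and $1$ and have size polynomial in the graph, hardness in the strong sense follows directly. The idea is to encode each edge of the graph as one observed day, with both endpoints active but peaking at different intervals. On such a day, a partition earns extra weighted coincidence exactly when it separates the two endpoints.

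\emph{Construction.} Take a graph $G=(V,E)$ with $|E|\geq1$ and a target $k$. We create one customer per vertex, set $N=|V|$, $K=2$, $m=2$ and $L_1=L_2=1$. For each edge $e=\{u,v\}$ we create a day $d_e$ with the following loads:
\begin{itemize}
\item $\ell_{u,d_e}=(1,0)$;
\item $\ell_{v,d_e}=(0,1)$;
\item $\ell_{w,d_e}=(0,0)$ for every other vertex $w$.
\end{itemize}
All loads are binary, and each day has $B_{d_e}=2>0$, so $\mathcal D^{+}$ consists of all $|E|$ days. The instance has size $O(|V||E|)$.

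\emph{Daily scores.} We evaluate \eqref{eq:daily_score} on day $d_e$.
\begin{itemize}
\item If $u$ and $v$ lie in the same group, that group's aggregate curve is $(1,1)$, with peak $1$. The other group's curve is identically zero. The daily score is therefore $1/2$.
\item If $u$ and $v$ lie in different groups, each group has peak $1$, and the daily score is $2/2=1$.
\end{itemize}
Hence, for a bipartition $\mathcal C=\{C_1,C_2\}$, we get
\[
\overline{\mathrm{CF}}^{\mathrm w}(\mathcal C;\mathcal D)=\frac12+\frac{\mathrm{cut}(C_1,C_2)}{2|E|}.
\]
It follows that $G$ has a cut of size at least $k$ if and only if the instance has a feasible partition with objective at least $1/2+k/(2|E|)$.

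\emph{Size constraints and the main obstacle.} The step needing care is the constraint $L_1=L_2=1$, since it forbids a trivial partition with one empty group. Any maximum cut of a graph with at least one edge has value at least $1$, and a cut of value at least $1$ has both sides nonempty. So restricting to feasible partitions does not change the optimum. This also handles any target $k\geq1$, which is the only nontrivial case.

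The remaining point is the formal notion of strong NP-hardness. Because all numerical data are $0/1$ and the reduction is polynomial, the problem stays NP-hard under unary encoding. The number of days, $|E|$, must be part of the input, which matches the hypothesis of Proposition~\ref{prop:hardness}.
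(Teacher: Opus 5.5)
Your proposal is correct and follows essentially the same reduction as the paper: one customer per vertex, one day per edge with endpoint profiles $(1,0)$ and $(0,1)$, yielding $\overline{\mathrm{CF}}^{\mathrm w}=1/2+\mathrm{cut}/(2|E|)$, the threshold $1/2+k/(2|E|)$, and the observation that a target $k\geq1$ makes the constraints $L_1=L_2=1$ harmless. One small imprecision: not all numerical data are $0/1$, since the threshold $(|E|+k)/(2|E|)$ is rational; what you need, and what the paper states, is that its numerator and denominator are polynomially bounded (here by $2|E|$), and that is enough for strong NP-hardness.
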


\begin{IEEEproof}
We consider the decision version of \eqref{eq:global_problem}, which asks whether some feasible partition achieves weighted CF at or above a prescribed rational threshold. To show that it is strongly NP-hard, we reduce from unweighted Max-Cut \cite{garey1976simplified}. Its decision version asks, for an undirected graph $G=(V,E)$ and an integer $q$, whether $G$ has a cut with at least $q$ crossing edges. We assume $1\leq q\leq |E|$, since the remaining cases are trivial. Given such an instance $(G,q)$, we construct in polynomial time a clustering instance with threshold $1/2+q/(2|E|)$ that admits a feasible partition attaining or exceeding this threshold if and only if $G$ has a cut with at least $q$ crossing edges.

The instance has one customer per vertex and one day per edge, with $K=2$, two intraday intervals, and $L_1=L_2=1$. On the day associated with edge $\{u,v\}$, customer $u$ has profile $(1,0)$, customer $v$ has profile $(0,1)$, and all other customers have profile $(0,0)$. Thus, $B_d=2$ on every day and $\mathcal D^{+}=\mathcal D$. Any partition into two groups corresponds to a cut of $G$---the groups are the two sides, and an edge crosses the cut when its endpoints lie in different groups. If the endpoints of the day's edge share a group, that group's aggregate profile is $(1,1)$ and the other's is $(0,0)$, so the two groups' peaks sum to one. Otherwise, each group has an aggregate peak of one, giving a sum of two. The daily weighted CF is therefore $1/2$ when the endpoints belong to the same group and $1$ otherwise. Averaging across days yields \mbox{$\overline{\mathrm{CF}}^{\mathrm w}(\mathcal C;\mathcal D)=1/2+c_G(\mathcal C)/(2|E|)$}, where $c_G(\mathcal C)$ counts crossing edges.

Since $q\geq1$, any cut with at least $q$ crossing edges has two nonempty sides and hence satisfies the minimum group-size constraints. Conversely, every feasible partition defines a cut. By the formula above, a feasible partition attains weighted CF of at least $1/2+q/(2|E|)$ if and only if $c_G(\mathcal C)\geq q$, which completes the reduction.

The construction uses $2|V||E|$ binary load values and is polynomial in the input size. The load values and minimum-size parameters are bounded by constants, while the threshold has polynomially bounded numerator and denominator. The reduction therefore establishes strong NP-hardness for this restricted case, and hence for the general problem \eqref{eq:global_problem}.
\end{IEEEproof}

\subsection{Two-Stage Coincidence-Oriented Clustering}
\label{sec:two_stage}

To address this difficulty, we propose a two-stage procedure (P+R). In the first stage, clustering of peak-time distributions (P) provides an interpretable initial partition. In the second, fixed-peak refinement (R), our main methodological contribution, improves that partition through exact reassignment.

\subsubsection{Stage 1: Peak-Time Distribution Initialization (P)}

To obtain an initial partition, we represent each customer by the empirical distribution of its daily peak times, capturing when its highest demand typically occurs. For customer $i$, this distribution is defined as
\begin{equation}
\pi_i(t)
=
\frac{1}{|\mathcal D|}
\sum_{d\in\mathcal D}
\frac{\mathbf 1\{\ell_{i,d}(t)=b_{i,d}\}}
     {\sum_{t'\in\mathcal T}\mathbf 1\{\ell_{i,d}(t')=b_{i,d}\}},
\qquad t\in\mathcal T.
\label{eq:peak_distribution}
\end{equation}
Each day contributes one unit of probability mass, divided equally among intervals attaining the daily maximum. Averaging these contributions yields the probability vector $\boldsymbol{\pi}_i=(\pi_i(1),\ldots,\pi_i(m))$, which captures the concentration, dispersion, and possible multimodality of recurring peak times without considering load magnitudes or the remaining shape of daily profiles.\footnote{Yuan et al.\ \cite{yuan2020systempeak} also construct individual peak-time distributions, but use them for classification rather than directly for clustering.} Fig.~\ref{fig:peak_representation} illustrates this construction for one customer.

We then cluster these distributions using Euclidean $K$-means. If a fitted partition violates the minimum group-size constraints, we hold its centroids fixed and recompute assignments by minimizing the same squared-distance criterion subject to those constraints. The selected feasible partition, denoted by $\mathcal C^{(0)}$, serves as the starting point for R.

\subsubsection{Stage 2: Fixed-Peak Refinement (R)}
\label{sec:stage2}

Starting from any feasible partition $\mathcal C^{(0)}$---in our case, the one produced by P, R alternates between identifying aggregate group peak times and reassigning customers while those times remain fixed. The procedure relies on a fixed-peak objective that provides a lower bound on $\overline{\mathrm{CF}}^{\mathrm w}(\mathcal C;\mathcal D)$, is tight at the current partition, and is linear in customer assignments. This structure allows exact reassignment under minimum group-size constraints, with any strict improvement in the bound also increasing the original objective.

\emph{Fixed-peak lower bound.} To make this bound precise, let $h=0,1,\ldots$ denote the iteration index and $\mathcal C^{(h)}$ the current feasible partition. For each group $k$ and day $d\in\mathcal D^{+}$, let $\tau_{k,d}^{(h)}$ denote the selected interval at which the group's aggregate load attains its maximum.\footnote{When several intervals attain the same aggregate peak, our implementation selects one uniformly at random.} These intervals form a $K\times|\mathcal D^{+}|$ array $\boldsymbol{\tau}^{(h)}$ with entries in $\mathcal T$. Given any such array $\boldsymbol{\tau}$, the fixed-peak objective $Q$ evaluates a partition at the prescribed group--day intervals, rather than at its own aggregate peak times:
\begin{equation}
Q(\mathcal C,\boldsymbol{\tau})
=
\frac{1}{|\mathcal D^{+}|}
\sum_{d\in\mathcal D^{+}}
\frac{1}{B_d}
\sum_{k=1}^{K}
\sum_{i\in C_k}
\ell_{i,d}(\tau_{k,d}).
\label{eq:fixed_objective}
\end{equation}

For a fixed partition $\mathcal C$, $Q$ is maximized over $\boldsymbol{\tau}$ by selecting, independently for each group and day, an interval at which the group's aggregate load peaks. Therefore, $\overline{\mathrm{CF}}^{\mathrm w}(\mathcal C;\mathcal D)=\max_{\boldsymbol{\tau}}Q(\mathcal C,\boldsymbol{\tau})$, and problem~\eqref{eq:global_problem} is equivalent to maximizing $Q$ jointly over feasible partitions and peak-time arrays. R exploits this equivalence through block-coordinate ascent, alternately optimizing peak times and customer assignments.

The improvement guarantee follows from evaluating partitions at the current peak times $\boldsymbol{\tau}^{(h)}$. For any feasible partition $\mathcal C$, this gives a lower bound on its weighted CF, i.e., $Q(\mathcal C,\boldsymbol{\tau}^{(h)})\leq \overline{\mathrm{CF}}^{\mathrm w}(\mathcal C;\mathcal D)$, because weighted CF is the maximum of $Q$ over $\boldsymbol{\tau}$. The bound is generally not tight, since $\boldsymbol{\tau}^{(h)}$ need not coincide with the aggregate peak times of $\mathcal C$. For the current partition $\mathcal C^{(h)}$, however, $\boldsymbol{\tau}^{(h)}$ was selected as its own aggregate peak times, so the bound is tight, i.e., $Q(\mathcal C^{(h)},\boldsymbol{\tau}^{(h)})=\overline{\mathrm{CF}}^{\mathrm w}(\mathcal C^{(h)};\mathcal D)$. Consequently, any feasible reassignment $\mathcal C'$ with $Q(\mathcal C',\boldsymbol{\tau}^{(h)})>Q(\mathcal C^{(h)},\boldsymbol{\tau}^{(h)})$ must also strictly increase $\overline{\mathrm{CF}}^{\mathrm w}$:
\begin{equation}
\begin{aligned}
\overline{\mathrm{CF}}^{\mathrm w}(\mathcal C';\mathcal D)
&\geq Q(\mathcal C',\boldsymbol{\tau}^{(h)})\\
&> Q(\mathcal C^{(h)},\boldsymbol{\tau}^{(h)})\\
&= \overline{\mathrm{CF}}^{\mathrm w}(\mathcal C^{(h)};\mathcal D).
\end{aligned}
\label{eq:monotone_guarantee}
\end{equation}

\emph{Exact fixed-peak reassignment.} With peak times fixed, the lower-bound objective can be expressed as a sum of individual assignment scores. For customer $i$ and candidate group $k$, define the score $s_{i,k}^{(h)}=\frac{1}{|\mathcal D^{+}|}\sum_{d\in\mathcal D^{+}}\ell_{i,d}\big(\tau_{k,d}^{(h)}\big)/B_d$, that is, the customer's demand at group $k$'s current peak times, normalized by $B_d$ and averaged over days. Exchanging the order of summation in \eqref{eq:fixed_objective} then gives \mbox{$Q(\mathcal C,\boldsymbol{\tau}^{(h)})=\sum_{k=1}^{K}\sum_{i\in C_k}s_{i,k}^{(h)}$}, so each customer's contribution depends only on the group to which it is assigned. Importantly, the scores reflect realized demand at aggregate group peaks, allowing secondary peaks and other high-demand intervals to contribute even when a customer's daily maximum occurs elsewhere.

This additive structure allows all customer assignments to be optimized simultaneously while enforcing the minimum group sizes. Let $x=(x_{i,k})\in\{0,1\}^{N\times K}$ be the assignment matrix, with $x_{i,k}=1$ if customer $i$ is assigned to group $k$. The reassignment problem is then:
\begin{equation}
\begin{aligned}
\max_{x}\quad
& \sum\nolimits_{i=1}^{N}\sum\nolimits_{k=1}^{K}s_{i,k}^{(h)}x_{i,k} \\
\mathrm{s.t.}\quad
& \textstyle\sum_{k=1}^{K}x_{i,k}=1,
&& i=1,\ldots,N,\\
& \textstyle\sum_{i=1}^{N}x_{i,k}\geq L_k,
&& k=1,\ldots,K,\\
& x_{i,k}\in\{0,1\}.
\end{aligned}
\label{eq:fixed_assignment}
\end{equation}
Problem~\eqref{eq:fixed_assignment} admits a minimum-cost-flow formulation with an integral optimum and can therefore be solved exactly in polynomial time \cite{ahuja1993networkflows}, as in constrained $K$-means \cite{bradley2000constrained}.

Without minimum group-size constraints, each customer would be assigned to its highest-scoring group. Enforcing these constraints, however, may require assigning some customers to lower-scoring groups. We measure the loss from the size constraints by the regret $r_{i,k}^{(h)}=\max_j s_{i,j}^{(h)}-s_{i,k}^{(h)}$ for assigning customer $i$ to group $k$. Since $\sum_{i}\max_j s_{i,j}^{(h)}$ does not depend on the assignment, solving \eqref{eq:fixed_assignment} is equivalent to minimizing the total regret $\sum_{i}\sum_{k}r_{i,k}^{(h)}x_{i,k}$, which amounts to choosing which customers fill each group's mandatory positions. To this end, our implementation creates $L_k$ mandatory positions for each group and matches them to distinct customers through a rectangular assignment problem, solved using a shortest-augmenting-path algorithm \cite{crouse2016rectangular}. The remaining customers are then assigned to their highest-scoring groups, incurring no additional regret. Since every feasible partition must assign at least $L_k$ customers to each group, its total regret cannot be lower than the minimum matching cost. The resulting assignment attains this bound and therefore solves \eqref{eq:fixed_assignment} exactly.

\emph{Monotone refinement.} Let $\widehat{\mathcal C}^{(h)}$ denote the partition induced by an exact solution of \eqref{eq:fixed_assignment}. This candidate is accepted only if its fixed-peak objective exceeds that of the current partition by more than a tolerance $\epsilon\geq0$, set to $10^{-14}$ in the implementation. By \eqref{eq:monotone_guarantee}, every accepted update strictly increases $\overline{\mathrm{CF}}^{\mathrm w}$. Upon acceptance, we set $\mathcal C^{(h+1)}=\widehat{\mathcal C}^{(h)}$ and recompute the group peak times for the next iteration.

If the improvement condition fails, refinement stops at $\mathcal C^{(h)}$. Because reassignment is solved exactly, no feasible partition improves $Q$ at $\boldsymbol\tau^{(h)}$ by more than $\epsilon$, and $\boldsymbol\tau^{(h)}$ already maximizes $Q$ for $\mathcal C^{(h)}$. With $\epsilon=0$, the pair $(\mathcal C^{(h)},\boldsymbol\tau^{(h)})$ is therefore a blockwise optimum: neither block can be improved while the other is held fixed. This is weaker than global optimality, since changing assignments and peak times jointly may still yield a better solution.\footnote{In our implementation, refinement is also limited to 200 iterations. Reaching this limit does not guarantee even blockwise optimality.}

\emph{Computational complexity.} Let $D=|\mathcal D^{+}|$ and $M=\sum_{k=1}^{K}L_k$. For the dense regret-based implementation, one refinement iteration has complexity $O\!\left(ND(m+K)+NM^2\right)$, where the first term accounts for identifying group peak times and computing assignment scores, and the second for solving the rectangular assignment problem. With $D$, $m$, $K$, and $M$ fixed, this bound is linear in $N$. However, when $M$ grows proportionally with $N$, the assignment component yields a cubic bound.

\section{Data and Empirical Evaluation Framework}
\label{sec:experimental}

This section describes the smart-meter panels and the evaluation framework used to assess P+R, including chronological testing, comparison methods, and paired resampling.

\subsection{Smart-Meter Panels and Preprocessing}
\label{sec:data}

We evaluate P+R using four public smart-meter panels comprising 27,654 customers. The panels differ in geographic setting, customer composition, and sampling resolution. To assess performance over time, each panel is divided into an earlier training period $\mathcal D_{\mathrm{tr}}$ of approximately nine months and a subsequent test period $\mathcal D_{\mathrm{te}}$ of approximately three months. Table~\ref{tab:datasets} summarizes the retained customer cohorts and their evaluation windows.

\begin{table}[!t]
\centering
\caption{Smart-meter panels and chronological evaluation windows}
\label{tab:datasets}
\begingroup\tablebodyformat
\begin{tabular*}{\columnwidth}{@{\extracolsep{\fill}}lcrcc@{}}
\toprule
Dataset & Grid & Meters & Training & Test \\
\midrule
GoiEner & 60 & 12,507 & 01/03/20--30/11/20 & 01/12/20--28/02/21 \\
SGSC    & 30 & 6,031  & 01/01/13--30/09/13 & 01/10/13--31/12/13 \\
CKW     & 15 & 4,944  & 02/01/21--30/09/21 & 01/10/21--31/12/21 \\
CAMSL   & 30 & 4,172  & 01/01/18--30/09/18 & 01/10/18--30/12/18 \\
\bottomrule
\end{tabular*}
\endgroup
\tablenotes{Grid denotes the native sampling interval in minutes. CKW excludes March~28 and October~31, 2021, because the source sampling grids on those dates are malformed.}
\end{table}

GoiEner contains Spanish household records \cite{granja2022goiener,quesada2024electricity}, while SGSC comprises Australian residential profiles \cite{roberts2019sgsc}. CKW includes Swiss households and small businesses, both of which are retained in the analysis \cite{sawicki2023ckw,ckwOpenData}. CAMSL contains Japanese residential customers from an experimental tariff study and a nonparticipant sample \cite{camslData,kiguchi2021predicting}. Its retained cohort comprises 578 treatment customers, 348 controls, and 3,246 nonparticipants.

For each panel, we retain only meters with complete, finite, and nonnegative observations on the native sampling grid throughout both predefined periods. All eligible weekdays, weekends, and holidays are included. We preserve source measurements and timestamps, including source-filled observations, without additional imputation or global outlier trimming.

\subsection{Chronological Evaluation Protocol}
\label{sec:validation_protocol}

A partition may achieve high weighted CF during fitting without maintaining the same coincidence on later days. To assess this persistence, we adopt a chronological evaluation protocol \cite{ullmann2022validation}, using the earlier period $\mathcal D_{\mathrm{tr}}$ for fitting and reserving the subsequent period $\mathcal D_{\mathrm{te}}$ for testing, as illustrated in Fig.~\ref{fig:study_design}. For the retained customer cohort, all data-dependent steps, including the construction of customer representations, transformations and dissimilarities, clustering fits, candidate selection, and refinement, use only training observations. Note that, because the periods are consecutive rather than season-matched, this evaluation also incorporates temporal changes in demand patterns.

During testing, customer memberships remain fixed, but each group's aggregate load curve and daily peak are recomputed from the later observations. The resulting test weighted CF measures the coincidence maintained by the fitted groups out of time, whereas training weighted CF is used to assess the coincidence achieved by the different grouping strategies during fitting, particularly the improvements obtained through refinement and the effects of competing initializations.

\begin{figure}[!t]
\centering
\includegraphics[width=\columnwidth]{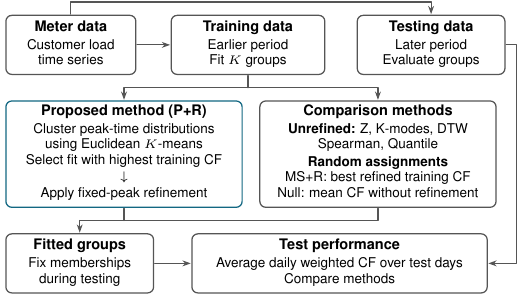}
\caption{Chronological fitting and evaluation for a fixed $K$.}
\label{fig:study_design}
\end{figure}

\subsection{Comparison Methods and Fitting Protocol}
\label{sec:benchmarks}

In this section, we describe the controlled comparisons, literature-based benchmarks, and fitting procedures used to evaluate P+R.

\subsubsection{Controlled Comparisons of Grouping and Refinement}

To distinguish the contributions of initialization and refinement, we compare P with P+R and consider two alternative initializations based on normalized mean profiles and daily peak-time sequences.\footnote{Both alternatives are evaluated before and after applying R, allowing us to examine how the starting representation affects the refined solution.}

\emph{Normalized mean-profile clustering (Z).} Z represents each customer by its typical intraday load shape after removing daily differences in consumption scale. Let $v_{i,d}=\sum_{t\in\mathcal T}\ell_{i,d}(t)$ and define $\widetilde{\ell}_{i,d}(t)=\ell_{i,d}(t)/v_{i,d}$ when $v_{i,d}>0$, or $1/m$ otherwise. Averaging these normalized profiles over the training period gives $z_i(t)=\frac{1}{|\mathcal D_{\mathrm{tr}}|}\sum_{d\in\mathcal D_{\mathrm{tr}}}\widetilde{\ell}_{i,d}(t)$. The vectors $\boldsymbol{z}_i=(z_i(1),\ldots,z_i(m))$ are clustered using Euclidean $K$-means \cite{kwac2014segmentation,yilmaz2019comparison}.

\emph{Daily peak-time sequence clustering (K-modes).} Customers with similar peak-time distributions need not reach their peaks at the same times on the same calendar days. To capture this day-specific alignment, we represent each customer by a categorical sequence of daily peak intervals over the training period.\footnote{When multiple intervals attain the daily maximum, one is selected uniformly at random. An independent sequence realization is generated for each stochastic start and reused across values of $K$.} These sequences are clustered using unweighted $K$-modes \cite{huang1998extensions}, which minimizes mismatches between individual daily peak intervals and the corresponding day-specific modal intervals of their assigned groups. Thus, unlike P---which summarizes peak-time frequencies---K-modes preserves the alignment of individual peaks across calendar days.

\subsubsection{Random-Assignment References}

To assess the contribution of refinement independently of structured clustering, we consider two references based on random feasible partitions. In both cases, customer group labels are sampled independently with equal probability, conditional on satisfying the minimum group-size constraints---yielding a uniform distribution over feasible labeled partitions.

\emph{Random-assignment baseline (Null).} For a given $K$, Null generates 500 independent feasible partitions into $K$ groups, without selection or refinement. Each partition is held fixed across training and testing. For each period, the Null score is the mean weighted CF over the 500 partitions. This provides a reference for assessing whether structured grouping achieves greater coincidence than random assignment at the same $K$.

\emph{Multistart refinement (MS+R).} MS+R generates ten independent random feasible partitions and applies one complete run of R to each using training observations. The refined partition with the highest $\overline{\mathrm{CF}}^{\mathrm w}(\mathcal C;\mathcal D_{\mathrm{tr}})$ is retained for testing. Comparing MS+R with P+R allows us to assess whether peak-time-based initialization provides an additional advantage over random starts after refinement.

\subsubsection{Literature-Based Benchmarks}

We also consider three established clustering approaches that characterize different aspects of load behavior.

\emph{Dynamic time warping clustering (DTW).} Like Z, DTW uses normalized mean profiles $\boldsymbol{z}_i$, but allows similar load shapes to be matched despite temporal shifts. Following Yerbury et al.\ \cite{yerbury2025comparing}, we compute pairwise squared-cost DTW dissimilarities using a symmetric one-hour Sakoe--Chiba window \cite{sakoe1978dtw}. We  cluster these dissimilarities using FasterPAM $k$-medoids \cite{schubert2021fasterpam}, initialized with randomly selected medoids.

\emph{Spearman-correlation clustering (Spearman).} To characterize synchronized dependence across customers, we adapt the approach of Sun et al.\ \cite{sun2019peak}. We define customer dissimilarity as $1-\rho_{ij}$, where $\rho_{ij}$ is the Spearman rank correlation computed from synchronized training observations, using average ranks for ties. We then apply single-linkage hierarchical clustering and cut the resulting hierarchy at $K$. Unlike profile-based methods, this approach groups customers according to dependence in relative demand levels rather than absolute load magnitudes.

\emph{Quantile-autocovariance clustering (Quantile).} The approach of Alonso et al.\ \cite{alonso2020quantile} characterizes temporal dependence within each customer's load variations. Let $u$ index sampling intervals in the original training calendar, and let $\ell_i(u)$ denote customer $i$'s load at interval $u$. The one-day seasonal log differences are then defined as $Y_i(u)=\log(\ell_i(u)+\eta)-\log(\ell_i(u-m)+\eta)$, where $\eta=0$ if all training loads are positive and otherwise equals half the smallest positive training load in the cohort. For each $(\gamma,\gamma')\in\{0.1,0.5,0.9\}^2$, the lag-one quantile autocovariance is the average, over valid consecutive-slot pairs, of the product of two indicators, one for $Y_i(u)$ not exceeding its empirical $\gamma$-quantile and one for $Y_i(u+1)$ not exceeding its empirical $\gamma'$-quantile, less $\gamma\gamma'$. The resulting nine features are clustered using Euclidean complete-linkage hierarchical clustering. For comparability, we cut the hierarchy at the prescribed $K$ and retain all customers---including those in groups smaller than 1\% of the cohort, which Alonso et al.\ exclude from their main-group summaries.

\subsubsection{Fitting and Candidate Selection}

To reduce sensitivity to stochastic initialization, we generate 30 independent fits for P, Z, K-modes, and DTW at each $K>1$. Each method is fitted according to its native clustering objective. Only afterward do we use $\overline{\mathrm{CF}}^{\mathrm w}(\mathcal C;\mathcal D_{\mathrm{tr}})$ to select the feasible partition with the highest unrefined training score. Spearman and Quantile instead produce a single hierarchy and require no multistart search.

The full-cohort evaluation considers $K=1,\ldots,40$ with $L_k=1$. For each $K$, we apply one complete run of R to the selected P partition to obtain P+R and compare all methods at the same number of groups.\footnote{Across eight full-cohort P+R fits (four datasets, $K\in\{20,40\}$, $L_k=1$), initialization with 30 starts and candidate selection took 1.47--3.05~s, while refinement took 1.06--3.10~s over 10--45 accepted updates. Combined fitting times were 3.08--5.47~s, excluding data loading, preprocessing, and feature construction. Runs used an Intel Core i7-10700 under Linux, with four single-threaded initialization workers and six numerical threads for refinement.}

\begin{table*}[t]
\centering
\caption{Training weighted CF before and after refinement}
\label{tab:initialization_refinement}
\begingroup\tablebodyformat
\begin{tabular*}{\textwidth}{@{\extracolsep{\fill}}l@{\hspace{2.8pt}}r@{\hspace{2.8pt}}r@{\hspace{2.8pt}}r@{\hspace{2.8pt}}r@{\hspace{2.8pt}}r@{\hspace{2.8pt}}r@{\hspace{2.8pt}}r@{\hspace{2.8pt}}r@{\hspace{2.8pt}}r@{}}
\toprule
\multicolumn{2}{c}{} & \multicolumn{4}{c}{\textit{A. Before refinement (P)}} & \multicolumn{4}{c}{\textit{B. After refinement (P+R)}} \\
\cmidrule(lr){3-6}\cmidrule(lr){7-10}
Dataset & $K$ & \multicolumn{1}{c}{CF (\%)} & \multicolumn{1}{c}{P $-$ Z} & \multicolumn{1}{c}{P $-$ K-modes} & \multicolumn{1}{c}{P $-$ Null (random)} & \multicolumn{1}{c}{CF (\%)} & \multicolumn{1}{c}{P+R $-$ Z+R} & \multicolumn{1}{c}{P+R $-$ K-modes+R} & \multicolumn{1}{c}{P+R $-$ MS+R} \\
\midrule
GoiEner & 5 & $50.30$ & $\phantom{\mathord{-}}0.93\mkern2.5mu\rlap{$[0.68\mathord{,}1.17]^{*}$}\phantom{[\mathord{-}0.00\mathord{,}0.00]^{*}}$ & $\mathord{-}1.49\mkern2.5mu\rlap{$[\mathord{-}1.77\mathord{,}\mathord{-}1.23]^{*}$}\phantom{[\mathord{-}0.00\mathord{,}\mathord{-}0.00]^{*}}$ & $\phantom{0}5.79\mkern2.5mu\rlap{$[5.51\mathord{,}6.07]^{*}$}\phantom{[00.00\mathord{,}00.00]^{*}}$ & $53.21$ & $\phantom{\mathord{-}}0.19\mkern2.5mu\rlap{$[\mathord{-}0.09\mathord{,}0.48]^{\phantom{*}}$}\phantom{[\mathord{-}0.00\mathord{,}0.00]^{*}}$ & $\mathord{-}0.01\mkern2.5mu\rlap{$[\mathord{-}0.13\mathord{,}0.17]^{\phantom{*}}$}\phantom{[\mathord{-}0.00\mathord{,}0.00]^{*}}$ & $0.08\mkern2.5mu\rlap{$[\mathord{-}0.10\mathord{,}0.40]^{\phantom{*}}$}\phantom{[\mathord{-}0.00\mathord{,}0.00]^{*}}$ \\
 & 10 & $52.78$ & $\phantom{\mathord{-}}1.17\mkern2.5mu\rlap{$[0.96\mathord{,}1.38]^{*}$}\phantom{[\mathord{-}0.00\mathord{,}0.00]^{*}}$ & $\mathord{-}1.45\mkern2.5mu\rlap{$[\mathord{-}1.69\mathord{,}\mathord{-}1.22]^{*}$}\phantom{[\mathord{-}0.00\mathord{,}\mathord{-}0.00]^{*}}$ & $\phantom{0}8.16\mkern2.5mu\rlap{$[7.84\mathord{,}8.44]^{*}$}\phantom{[00.00\mathord{,}00.00]^{*}}$ & $55.84$ & $\mathord{-}0.03\mkern2.5mu\rlap{$[\mathord{-}0.25\mathord{,}0.14]^{\phantom{*}}$}\phantom{[\mathord{-}0.00\mathord{,}0.00]^{*}}$ & $\mathord{-}0.08\mkern2.5mu\rlap{$[\mathord{-}0.32\mathord{,}0.09]^{\phantom{*}}$}\phantom{[\mathord{-}0.00\mathord{,}0.00]^{*}}$ & $0.25\mkern2.5mu\rlap{$[\mathord{-}0.06\mathord{,}0.57]^{\phantom{*}}$}\phantom{[\mathord{-}0.00\mathord{,}0.00]^{*}}$ \\
 & 20 & $54.57$ & $\phantom{\mathord{-}}1.40\mkern2.5mu\rlap{$[1.15\mathord{,}1.62]^{*}$}\phantom{[\mathord{-}0.00\mathord{,}0.00]^{*}}$ & $\mathord{-}1.25\mkern2.5mu\rlap{$[\mathord{-}1.46\mathord{,}\mathord{-}1.03]^{*}$}\phantom{[\mathord{-}0.00\mathord{,}\mathord{-}0.00]^{*}}$ & $\phantom{0}9.73\mkern2.5mu\rlap{$[9.38\mathord{,}10.05]^{*}$}\phantom{[00.00\mathord{,}00.00]^{*}}$ & $57.68$ & $\phantom{\mathord{-}}0.05\mkern2.5mu\rlap{$[\mathord{-}0.17\mathord{,}0.27]^{\phantom{*}}$}\phantom{[\mathord{-}0.00\mathord{,}0.00]^{*}}$ & $\mathord{-}0.14\mkern2.5mu\rlap{$[\mathord{-}0.32\mathord{,}0.06]^{\phantom{*}}$}\phantom{[\mathord{-}0.00\mathord{,}0.00]^{*}}$ & $0.81\mkern2.5mu\rlap{$[0.54\mathord{,}1.04]^{*}$}\phantom{[\mathord{-}0.00\mathord{,}0.00]^{*}}$ \\
 & 40 & $55.87$ & $\phantom{\mathord{-}}1.45\mkern2.5mu\rlap{$[1.30\mathord{,}1.61]^{*}$}\phantom{[\mathord{-}0.00\mathord{,}0.00]^{*}}$ & $\mathord{-}0.87\mkern2.5mu\rlap{$[\mathord{-}1.02\mathord{,}\mathord{-}0.71]^{*}$}\phantom{[\mathord{-}0.00\mathord{,}\mathord{-}0.00]^{*}}$ & $10.60\mkern2.5mu\rlap{$[10.31\mathord{,}10.90]^{*}$}\phantom{[00.00\mathord{,}00.00]^{*}}$ & $59.10$ & $\phantom{\mathord{-}}0.06\mkern2.5mu\rlap{$[\mathord{-}0.05\mathord{,}0.21]^{\phantom{*}}$}\phantom{[\mathord{-}0.00\mathord{,}0.00]^{*}}$ & $\phantom{\mathord{-}}0.06\mkern2.5mu\rlap{$[\mathord{-}0.06\mathord{,}0.19]^{\phantom{*}}$}\phantom{[\mathord{-}0.00\mathord{,}0.00]^{*}}$ & $0.83\mkern2.5mu\rlap{$[0.52\mathord{,}1.09]^{*}$}\phantom{[\mathord{-}0.00\mathord{,}0.00]^{*}}$ \\
\addlinespace[3pt]
SGSC & 5 & $47.16$ & $\phantom{\mathord{-}}1.11\mkern2.5mu\rlap{$[0.96\mathord{,}1.24]^{*}$}\phantom{[\mathord{-}0.00\mathord{,}0.00]^{*}}$ & $\mathord{-}0.69\mkern2.5mu\rlap{$[\mathord{-}0.84\mathord{,}\mathord{-}0.54]^{*}$}\phantom{[\mathord{-}0.00\mathord{,}\mathord{-}0.00]^{*}}$ & $\phantom{0}3.56\mkern2.5mu\rlap{$[3.40\mathord{,}3.70]^{*}$}\phantom{[00.00\mathord{,}00.00]^{*}}$ & $49.47$ & $\mathord{-}0.15\mkern2.5mu\rlap{$[\mathord{-}0.58\mathord{,}0.00]^{\phantom{*}}$}\phantom{[\mathord{-}0.00\mathord{,}0.00]^{*}}$ & $\mathord{-}0.05\mkern2.5mu\rlap{$[\mathord{-}0.47\mathord{,}0.12]^{\phantom{*}}$}\phantom{[\mathord{-}0.00\mathord{,}0.00]^{*}}$ & $0.49\mkern2.5mu\rlap{$[0.00\mathord{,}0.69]^{*}$}\phantom{[\mathord{-}0.00\mathord{,}0.00]^{*}}$ \\
 & 10 & $48.97$ & $\phantom{\mathord{-}}1.32\mkern2.5mu\rlap{$[1.13\mathord{,}1.48]^{*}$}\phantom{[\mathord{-}0.00\mathord{,}0.00]^{*}}$ & $\mathord{-}0.63\mkern2.5mu\rlap{$[\mathord{-}0.81\mathord{,}\mathord{-}0.41]^{*}$}\phantom{[\mathord{-}0.00\mathord{,}\mathord{-}0.00]^{*}}$ & $\phantom{0}5.12\mkern2.5mu\rlap{$[4.91\mathord{,}5.33]^{*}$}\phantom{[00.00\mathord{,}00.00]^{*}}$ & $51.64$ & $\phantom{\mathord{-}}0.09\mkern2.5mu\rlap{$[\mathord{-}0.22\mathord{,}0.37]^{\phantom{*}}$}\phantom{[\mathord{-}0.00\mathord{,}0.00]^{*}}$ & $\phantom{\mathord{-}}0.01\mkern2.5mu\rlap{$[\mathord{-}0.26\mathord{,}0.20]^{\phantom{*}}$}\phantom{[\mathord{-}0.00\mathord{,}0.00]^{*}}$ & $0.67\mkern2.5mu\rlap{$[0.35\mathord{,}1.31]^{*}$}\phantom{[\mathord{-}0.00\mathord{,}0.00]^{*}}$ \\
 & 20 & $50.46$ & $\phantom{\mathord{-}}1.75\mkern2.5mu\rlap{$[1.55\mathord{,}1.95]^{*}$}\phantom{[\mathord{-}0.00\mathord{,}0.00]^{*}}$ & $\mathord{-}0.42\mkern2.5mu\rlap{$[\mathord{-}0.63\mathord{,}\mathord{-}0.23]^{*}$}\phantom{[\mathord{-}0.00\mathord{,}\mathord{-}0.00]^{*}}$ & $\phantom{0}6.18\mkern2.5mu\rlap{$[5.90\mathord{,}6.47]^{*}$}\phantom{[00.00\mathord{,}00.00]^{*}}$ & $53.29$ & $\phantom{\mathord{-}}0.10\mkern2.5mu\rlap{$[\mathord{-}0.05\mathord{,}0.31]^{\phantom{*}}$}\phantom{[\mathord{-}0.00\mathord{,}0.00]^{*}}$ & $\phantom{\mathord{-}}0.02\mkern2.5mu\rlap{$[\mathord{-}0.13\mathord{,}0.21]^{\phantom{*}}$}\phantom{[\mathord{-}0.00\mathord{,}0.00]^{*}}$ & $0.92\mkern2.5mu\rlap{$[0.61\mathord{,}1.18]^{*}$}\phantom{[\mathord{-}0.00\mathord{,}0.00]^{*}}$ \\
 & 40 & $51.58$ & $\phantom{\mathord{-}}1.69\mkern2.5mu\rlap{$[1.50\mathord{,}1.90]^{*}$}\phantom{[\mathord{-}0.00\mathord{,}0.00]^{*}}$ & $\mathord{-}0.22\mkern2.5mu\rlap{$[\mathord{-}0.39\mathord{,}\mathord{-}0.06]^{*}$}\phantom{[\mathord{-}0.00\mathord{,}\mathord{-}0.00]^{*}}$ & $\phantom{0}6.56\mkern2.5mu\rlap{$[6.27\mathord{,}6.86]^{*}$}\phantom{[00.00\mathord{,}00.00]^{*}}$ & $54.70$ & $\phantom{\mathord{-}}0.11\mkern2.5mu\rlap{$[\mathord{-}0.02\mathord{,}0.26]^{\phantom{*}}$}\phantom{[\mathord{-}0.00\mathord{,}0.00]^{*}}$ & $\phantom{\mathord{-}}0.12\mkern2.5mu\rlap{$[\mathord{-}0.03\mathord{,}0.27]^{\phantom{*}}$}\phantom{[\mathord{-}0.00\mathord{,}0.00]^{*}}$ & $0.83\mkern2.5mu\rlap{$[0.59\mathord{,}1.03]^{*}$}\phantom{[\mathord{-}0.00\mathord{,}0.00]^{*}}$ \\
\addlinespace[3pt]
CKW & 5 & $47.56$ & $\mathord{-}0.01\mkern2.5mu\rlap{$[\mathord{-}0.96\mathord{,}0.83]^{\phantom{*}}$}\phantom{[\mathord{-}0.00\mathord{,}0.00]^{*}}$ & $\mathord{-}3.87\mkern2.5mu\rlap{$[\mathord{-}4.74\mathord{,}\mathord{-}3.11]^{*}$}\phantom{[\mathord{-}0.00\mathord{,}\mathord{-}0.00]^{*}}$ & $18.03\mkern2.5mu\rlap{$[17.18\mathord{,}18.74]^{*}$}\phantom{[00.00\mathord{,}00.00]^{*}}$ & $53.62$ & $\phantom{\mathord{-}}0.30\mkern2.5mu\rlap{$[\mathord{-}0.23\mathord{,}0.72]^{\phantom{*}}$}\phantom{[\mathord{-}0.00\mathord{,}0.00]^{*}}$ & $\phantom{\mathord{-}}0.01\mkern2.5mu\rlap{$[\mathord{-}0.35\mathord{,}0.32]^{\phantom{*}}$}\phantom{[\mathord{-}0.00\mathord{,}0.00]^{*}}$ & $0.37\mkern2.5mu\rlap{$[\mathord{-}0.06\mathord{,}0.59]^{\phantom{*}}$}\phantom{[\mathord{-}0.00\mathord{,}0.00]^{*}}$ \\
 & 10 & $52.87$ & $\mathord{-}0.15\mkern2.5mu\rlap{$[\mathord{-}0.83\mathord{,}0.48]^{\phantom{*}}$}\phantom{[\mathord{-}0.00\mathord{,}0.00]^{*}}$ & $\mathord{-}3.32\mkern2.5mu\rlap{$[\mathord{-}4.04\mathord{,}\mathord{-}2.66]^{*}$}\phantom{[\mathord{-}0.00\mathord{,}\mathord{-}0.00]^{*}}$ & $22.97\mkern2.5mu\rlap{$[22.35\mathord{,}23.55]^{*}$}\phantom{[00.00\mathord{,}00.00]^{*}}$ & $59.75$ & $\phantom{\mathord{-}}0.35\mkern2.5mu\rlap{$[\mathord{-}0.49\mathord{,}1.24]^{\phantom{*}}$}\phantom{[\mathord{-}0.00\mathord{,}0.00]^{*}}$ & $\phantom{\mathord{-}}0.40\mkern2.5mu\rlap{$[\mathord{-}0.54\mathord{,}1.50]^{\phantom{*}}$}\phantom{[\mathord{-}0.00\mathord{,}0.00]^{*}}$ & $0.15\mkern2.5mu\rlap{$[\mathord{-}0.71\mathord{,}0.98]^{\phantom{*}}$}\phantom{[\mathord{-}0.00\mathord{,}0.00]^{*}}$ \\
 & 20 & $57.20$ & $\phantom{\mathord{-}}1.21\mkern2.5mu\rlap{$[0.73\mathord{,}1.71]^{*}$}\phantom{[\mathord{-}0.00\mathord{,}0.00]^{*}}$ & $\mathord{-}2.90\mkern2.5mu\rlap{$[\mathord{-}3.61\mathord{,}\mathord{-}2.10]^{*}$}\phantom{[\mathord{-}0.00\mathord{,}\mathord{-}0.00]^{*}}$ & $26.54\mkern2.5mu\rlap{$[25.94\mathord{,}27.11]^{*}$}\phantom{[00.00\mathord{,}00.00]^{*}}$ & $63.85$ & $\phantom{\mathord{-}}0.29\mkern2.5mu\rlap{$[\mathord{-}0.58\mathord{,}1.24]^{\phantom{*}}$}\phantom{[\mathord{-}0.00\mathord{,}0.00]^{*}}$ & $\phantom{\mathord{-}}0.04\mkern2.5mu\rlap{$[\mathord{-}0.81\mathord{,}0.99]^{\phantom{*}}$}\phantom{[\mathord{-}0.00\mathord{,}0.00]^{*}}$ & $0.03\mkern2.5mu\rlap{$[\mathord{-}0.73\mathord{,}0.66]^{\phantom{*}}$}\phantom{[\mathord{-}0.00\mathord{,}0.00]^{*}}$ \\
 & 40 & $60.33$ & $\phantom{\mathord{-}}1.38\mkern2.5mu\rlap{$[0.93\mathord{,}1.74]^{*}$}\phantom{[\mathord{-}0.00\mathord{,}0.00]^{*}}$ & $\mathord{-}2.20\mkern2.5mu\rlap{$[\mathord{-}2.70\mathord{,}\mathord{-}1.74]^{*}$}\phantom{[\mathord{-}0.00\mathord{,}\mathord{-}0.00]^{*}}$ & $28.26\mkern2.5mu\rlap{$[27.73\mathord{,}28.85]^{*}$}\phantom{[00.00\mathord{,}00.00]^{*}}$ & $66.21$ & $\phantom{\mathord{-}}0.29\mkern2.5mu\rlap{$[\mathord{-}0.10\mathord{,}0.68]^{\phantom{*}}$}\phantom{[\mathord{-}0.00\mathord{,}0.00]^{*}}$ & $\mathord{-}0.09\mkern2.5mu\rlap{$[\mathord{-}0.39\mathord{,}0.27]^{\phantom{*}}$}\phantom{[\mathord{-}0.00\mathord{,}0.00]^{*}}$ & $0.30\mkern2.5mu\rlap{$[\mathord{-}0.05\mathord{,}0.64]^{\phantom{*}}$}\phantom{[\mathord{-}0.00\mathord{,}0.00]^{*}}$ \\
\addlinespace[3pt]
CAMSL & 5 & $57.31$ & $\phantom{\mathord{-}}1.73\mkern2.5mu\rlap{$[1.48\mathord{,}2.11]^{*}$}\phantom{[\mathord{-}0.00\mathord{,}0.00]^{*}}$ & $\mathord{-}0.17\mkern2.5mu\rlap{$[\mathord{-}0.36\mathord{,}\mathord{-}0.01]^{*}$}\phantom{[\mathord{-}0.00\mathord{,}\mathord{-}0.00]^{*}}$ & $\phantom{0}5.13\mkern2.5mu\rlap{$[4.85\mathord{,}5.45]^{*}$}\phantom{[00.00\mathord{,}00.00]^{*}}$ & $59.35$ & $\phantom{\mathord{-}}0.25\mkern2.5mu\rlap{$[\mathord{-}0.05\mathord{,}0.67]^{\phantom{*}}$}\phantom{[\mathord{-}0.00\mathord{,}0.00]^{*}}$ & $\phantom{\mathord{-}}0.14\mkern2.5mu\rlap{$[\mathord{-}0.05\mathord{,}0.29]^{\phantom{*}}$}\phantom{[\mathord{-}0.00\mathord{,}0.00]^{*}}$ & $0.07\mkern2.5mu\rlap{$[\mathord{-}0.10\mathord{,}0.18]^{\phantom{*}}$}\phantom{[\mathord{-}0.00\mathord{,}0.00]^{*}}$ \\
 & 10 & $59.45$ & $\phantom{\mathord{-}}2.47\mkern2.5mu\rlap{$[2.23\mathord{,}2.71]^{*}$}\phantom{[\mathord{-}0.00\mathord{,}0.00]^{*}}$ & $\mathord{-}0.17\mkern2.5mu\rlap{$[\mathord{-}0.42\mathord{,}0.08]^{\phantom{*}}$}\phantom{[\mathord{-}0.00\mathord{,}\mathord{-}0.00]^{*}}$ & $\phantom{0}6.86\mkern2.5mu\rlap{$[6.52\mathord{,}7.23]^{*}$}\phantom{[00.00\mathord{,}00.00]^{*}}$ & $61.53$ & $\phantom{\mathord{-}}0.10\mkern2.5mu\rlap{$[\mathord{-}0.15\mathord{,}0.46]^{\phantom{*}}$}\phantom{[\mathord{-}0.00\mathord{,}0.00]^{*}}$ & $\mathord{-}0.03\mkern2.5mu\rlap{$[\mathord{-}0.23\mathord{,}0.19]^{\phantom{*}}$}\phantom{[\mathord{-}0.00\mathord{,}0.00]^{*}}$ & $0.00\mkern2.5mu\rlap{$[\mathord{-}0.18\mathord{,}0.25]^{\phantom{*}}$}\phantom{[\mathord{-}0.00\mathord{,}0.00]^{*}}$ \\
 & 20 & $61.13$ & $\phantom{\mathord{-}}2.68\mkern2.5mu\rlap{$[2.43\mathord{,}2.92]^{*}$}\phantom{[\mathord{-}0.00\mathord{,}0.00]^{*}}$ & $\phantom{\mathord{-}}0.09\mkern2.5mu\rlap{$[\mathord{-}0.11\mathord{,}0.29]^{\phantom{*}}$}\phantom{[\mathord{-}0.00\mathord{,}\mathord{-}0.00]^{*}}$ & $\phantom{0}7.92\mkern2.5mu\rlap{$[7.50\mathord{,}8.34]^{*}$}\phantom{[00.00\mathord{,}00.00]^{*}}$ & $63.32$ & $\phantom{\mathord{-}}0.22\mkern2.5mu\rlap{$[0.04\mathord{,}0.43]^{*}$}\phantom{[\mathord{-}0.00\mathord{,}0.00]^{*}}$ & $\phantom{\mathord{-}}0.03\mkern2.5mu\rlap{$[\mathord{-}0.10\mathord{,}0.18]^{\phantom{*}}$}\phantom{[\mathord{-}0.00\mathord{,}0.00]^{*}}$ & $0.30\mkern2.5mu\rlap{$[0.12\mathord{,}0.49]^{*}$}\phantom{[\mathord{-}0.00\mathord{,}0.00]^{*}}$ \\
 & 40 & $62.45$ & $\phantom{\mathord{-}}2.51\mkern2.5mu\rlap{$[2.33\mathord{,}2.69]^{*}$}\phantom{[\mathord{-}0.00\mathord{,}0.00]^{*}}$ & $\phantom{\mathord{-}}0.55\mkern2.5mu\rlap{$[0.39\mathord{,}0.73]^{*}$}\phantom{[\mathord{-}0.00\mathord{,}\mathord{-}0.00]^{*}}$ & $\phantom{0}8.25\mkern2.5mu\rlap{$[7.83\mathord{,}8.64]^{*}$}\phantom{[00.00\mathord{,}00.00]^{*}}$ & $64.83$ & $\phantom{\mathord{-}}0.23\mkern2.5mu\rlap{$[0.06\mathord{,}0.41]^{*}$}\phantom{[\mathord{-}0.00\mathord{,}0.00]^{*}}$ & $\phantom{\mathord{-}}0.19\mkern2.5mu\rlap{$[\mathord{-}0.03\mathord{,}0.36]^{\phantom{*}}$}\phantom{[\mathord{-}0.00\mathord{,}0.00]^{*}}$ & $0.41\mkern2.5mu\rlap{$[0.25\mathord{,}0.59]^{*}$}\phantom{[\mathord{-}0.00\mathord{,}0.00]^{*}}$ \\
\bottomrule
\end{tabular*}
\endgroup
\tablenotes{Results summarize 200 paired resampling replicates with complete refitting. CF columns report mean training weighted CF (\%) for P in Panel A and P+R in Panel B; contrasts report mean paired differences in percentage points. Brackets show pointwise 95\% robustness intervals, and asterisks mark intervals excluding zero before rounding.}
\end{table*}

\subsection{Paired Resampling and Robustness Assessment}
\label{sec:resampling}

To assess whether method comparisons remain robust to changes in customer composition and sampled days, we perform 200 paired resampling replicates with complete refitting for $K\in\{5,10,20,40\}$. In each replicate, we select 90\% of customers without replacement and resample days with replacement separately within the training and test periods, preserving their original numbers of days. Each sampled day includes the complete intraday profiles of all selected customers, preserving contemporaneous cross-customer dependence.

The same customer subset and sampled days are used for all methods and values of $K$ within each replicate, ensuring that comparisons are paired. All data-dependent fitting steps, including customer representations, clustering fits, candidate selection, and refinement, are repeated using only the sampled training observations.\footnote{For Quantile, sampled training dates serve as anchors, while seasonal differences and lagged pairs are computed using their original calendar positions.} Customer memberships are then held fixed when evaluating the corresponding sampled test days. Null is likewise recomputed from 500 independent feasible random partitions in each replicate.

This common resampling design allows us to evaluate relative performance through paired differences in test weighted CF. For replicate $b$, methods $a$ and $c$, and number of groups $K$, define
\begin{equation*}
\Delta_{a,c,K}^{(b)}
=
\overline{\mathrm{CF}}^{\mathrm w}
\!\left(
\mathcal C_{K}^{a,(b)};
\mathcal D_{\mathrm{te}}^{(b)}
\right)
-
\overline{\mathrm{CF}}^{\mathrm w}
\!\left(
\mathcal C_{K}^{c,(b)};
\mathcal D_{\mathrm{te}}^{(b)}
\right),
\end{equation*}
where $\mathcal C_K^{a,(b)}$ denotes the partition fitted by method $a$ in replicate $b$ with $K$ groups, and $\mathcal D_{\mathrm{te}}^{(b)}$ is the corresponding multiset of sampled test days. Note that positive differences favor method $a$.\footnote{For comparisons involving Null, its weighted CF is replaced by the average over 500 random partitions.} To assess the magnitude and robustness of these differences, we report their mean and empirical 2.5th and 97.5th percentiles for each dataset, method pair, and $K$, with the latter defining a pointwise 95\% robustness interval.

\begin{figure}[!t]
\centering
\includegraphics[width=\columnwidth]{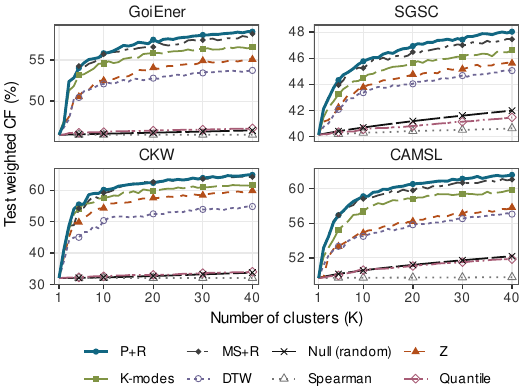}
\caption{Test weighted CF by number of groups.}
\label{fig:cf_performance}
\figurenotes{The curves show test weighted CF for the full cohorts, with customer memberships fitted on training data. Vertical scales differ across datasets.}
\end{figure}

\section{Empirical Results}
\label{sec:results}

This section presents the empirical results, examining the effects of initialization and refinement, out-of-time coincidence and robustness, group composition and peak-time persistence, and sensitivity to minimum group-size requirements.

\subsection{Initialization and Refinement}
\label{sec:results_initialization}

To evaluate P+R purely as an optimization procedure, we start by examining how it improves $\overline{\mathrm{CF}}^{\mathrm w}(\mathcal C;\mathcal D_{\mathrm{tr}})$ and how the refined solutions depend on initialization. Table~\ref{tab:initialization_refinement} compares performance before (Panel A) and after (Panel B) refinement across different starting partitions.
Before refinement, P exceeds the random-assignment baseline Null by 3.56--28.26 percentage points (pp) on average, with all sixteen robustness intervals above zero. However, its standing relative to the structured alternatives is mixed. P achieves a higher mean training weighted CF than Z in fourteen of the sixteen settings, whereas K-modes outperforms P in fourteen---particularly on CKW. These comparisons show that P captures coincidence-relevant structure beyond random grouping, although K-modes more often provides the stronger initial partition.

Refinement substantially improves the objective and reduces these initial differences. R increases P's training weighted CF in every setting, with mean gains of approximately 2.0--6.9 pp. Despite K-modes' initial advantage in most settings, all sixteen P+R $-$ K-modes+R robustness intervals contain zero. Likewise, only two P+R $-$ Z+R intervals exclude zero, both favoring P+R on CAMSL at $K=20$ and $K=40$. These findings suggest that refinement reduces the influence of the starting partition on the attained training objective.

To determine whether structured initialization provides an advantage after refinement, we compare P+R with MS+R. P+R achieves a higher mean training weighted CF than MS+R in fifteen of the sixteen settings, with the remaining difference rounding to zero. Although the gains are generally below one pp, eight robustness intervals exclude zero in favor of P+R.

Overall, the results indicate that R substantially improves training coincidence and reduces sensitivity to initialization, while P provides a modest additional advantage over random starts at relatively low computational cost.

\subsection{Out-of-Time Coincidence and Robustness}
\label{sec:results_performance}

Improvements in $\overline{\mathrm{CF}}^{\mathrm w}(\mathcal C;\mathcal D_{\mathrm{tr}})$ do not necessarily persist on later observations. We therefore examine the coincidence maintained by P+R on the held-out test periods and its robustness to changes in customer composition and sampled days. Fig.~\ref{fig:cf_performance} compares full-cohort test weighted CF across numbers of groups, while Table~\ref{tab:paired_final_methods} reports paired test-period differences under resampling with complete refitting.

\begin{table*}[t]
\centering
\caption{Paired differences in test weighted CF (P+R minus comparator)}
\label{tab:paired_final_methods}
\begingroup\tablebodyformat
\begin{tabular*}{\textwidth}{@{\extracolsep{\fill}}l@{\hspace{4pt}}r@{\hspace{7pt}}r@{\hspace{4pt}}r@{\hspace{4pt}}r@{\hspace{4pt}}r@{\hspace{4pt}}r@{\hspace{4pt}}r@{\hspace{4pt}}r@{}}
\toprule
Dataset & $K$ & \multicolumn{1}{c}{MS+R} & \multicolumn{1}{c}{Null (random)} & \multicolumn{1}{c}{Z} & \multicolumn{1}{c}{K-modes} & \multicolumn{1}{c}{DTW} & \multicolumn{1}{c}{Spearman} & \multicolumn{1}{c}{Quantile} \\
\midrule
GoiEner & 5 & $\phantom{\mathord{-}}0.32\mkern2.5mu\rlap{$[\mathord{-}0.61\mathord{,}1.31]^{\phantom{*}}$}\phantom{[\mathord{-}0.00\mathord{,}0.00]^{*}}$ & $\phantom{0}7.74\mkern2.5mu\rlap{$[7.23\mathord{,}8.34]^{*}$}\phantom{[00.00\mathord{,}00.00]^{*}}$ & $2.95\mkern2.5mu\rlap{$[2.53\mathord{,}3.45]^{*}$}\phantom{[0.00\mathord{,}0.00]^{*}}$ & $1.37\mkern2.5mu\rlap{$[0.83\mathord{,}2.30]^{*}$}\phantom{[0.00\mathord{,}0.00]^{*}}$ & $\phantom{0}3.05\mkern2.5mu\rlap{$[2.55\mathord{,}3.59]^{*}$}\phantom{[0.00\mathord{,}00.00]^{*}}$ & $\phantom{0}7.80\mkern2.5mu\rlap{$[7.29\mathord{,}8.42]^{*}$}\phantom{[00.00\mathord{,}00.00]^{*}}$ & $\phantom{0}7.61\mkern2.5mu\rlap{$[7.01\mathord{,}8.23]^{*}$}\phantom{[00.00\mathord{,}00.00]^{*}}$ \\
 & 10 & $\phantom{\mathord{-}}0.24\mkern2.5mu\rlap{$[\mathord{-}0.24\mathord{,}0.76]^{\phantom{*}}$}\phantom{[\mathord{-}0.00\mathord{,}0.00]^{*}}$ & $\phantom{0}9.36\mkern2.5mu\rlap{$[8.72\mathord{,}9.98]^{*}$}\phantom{[00.00\mathord{,}00.00]^{*}}$ & $2.91\mkern2.5mu\rlap{$[2.52\mathord{,}3.43]^{*}$}\phantom{[0.00\mathord{,}0.00]^{*}}$ & $1.61\mkern2.5mu\rlap{$[1.09\mathord{,}2.03]^{*}$}\phantom{[0.00\mathord{,}0.00]^{*}}$ & $\phantom{0}3.39\mkern2.5mu\rlap{$[2.89\mathord{,}3.84]^{*}$}\phantom{[0.00\mathord{,}00.00]^{*}}$ & $\phantom{0}9.49\mkern2.5mu\rlap{$[8.85\mathord{,}10.10]^{*}$}\phantom{[00.00\mathord{,}00.00]^{*}}$ & $\phantom{0}9.18\mkern2.5mu\rlap{$[8.50\mathord{,}9.81]^{*}$}\phantom{[00.00\mathord{,}00.00]^{*}}$ \\
 & 20 & $\phantom{\mathord{-}}0.72\mkern2.5mu\rlap{$[0.29\mathord{,}1.06]^{*}$}\phantom{[\mathord{-}0.00\mathord{,}0.00]^{*}}$ & $10.74\mkern2.5mu\rlap{$[10.08\mathord{,}11.42]^{*}$}\phantom{[00.00\mathord{,}00.00]^{*}}$ & $3.06\mkern2.5mu\rlap{$[2.65\mathord{,}3.54]^{*}$}\phantom{[0.00\mathord{,}0.00]^{*}}$ & $1.78\mkern2.5mu\rlap{$[1.43\mathord{,}2.15]^{*}$}\phantom{[0.00\mathord{,}0.00]^{*}}$ & $\phantom{0}4.08\mkern2.5mu\rlap{$[3.56\mathord{,}4.61]^{*}$}\phantom{[0.00\mathord{,}00.00]^{*}}$ & $11.02\mkern2.5mu\rlap{$[10.41\mathord{,}11.74]^{*}$}\phantom{[00.00\mathord{,}00.00]^{*}}$ & $10.52\mkern2.5mu\rlap{$[9.86\mathord{,}11.30]^{*}$}\phantom{[00.00\mathord{,}00.00]^{*}}$ \\
 & 40 & $\phantom{\mathord{-}}0.80\mkern2.5mu\rlap{$[0.45\mathord{,}1.09]^{*}$}\phantom{[\mathord{-}0.00\mathord{,}0.00]^{*}}$ & $11.51\mkern2.5mu\rlap{$[10.88\mathord{,}12.15]^{*}$}\phantom{[00.00\mathord{,}00.00]^{*}}$ & $2.94\mkern2.5mu\rlap{$[2.61\mathord{,}3.28]^{*}$}\phantom{[0.00\mathord{,}0.00]^{*}}$ & $2.13\mkern2.5mu\rlap{$[1.85\mathord{,}2.42]^{*}$}\phantom{[0.00\mathord{,}0.00]^{*}}$ & $\phantom{0}4.26\mkern2.5mu\rlap{$[3.90\mathord{,}4.69]^{*}$}\phantom{[0.00\mathord{,}00.00]^{*}}$ & $12.10\mkern2.5mu\rlap{$[11.48\mathord{,}12.71]^{*}$}\phantom{[00.00\mathord{,}00.00]^{*}}$ & $11.21\mkern2.5mu\rlap{$[10.48\mathord{,}11.94]^{*}$}\phantom{[00.00\mathord{,}00.00]^{*}}$ \\
\addlinespace[3pt]
SGSC & 5 & $\phantom{\mathord{-}}0.35\mkern2.5mu\rlap{$[\mathord{-}0.15\mathord{,}0.57]^{\phantom{*}}$}\phantom{[\mathord{-}0.00\mathord{,}0.00]^{*}}$ & $\phantom{0}3.68\mkern2.5mu\rlap{$[3.18\mathord{,}4.02]^{*}$}\phantom{[00.00\mathord{,}00.00]^{*}}$ & $1.87\mkern2.5mu\rlap{$[1.43\mathord{,}2.14]^{*}$}\phantom{[0.00\mathord{,}0.00]^{*}}$ & $1.08\mkern2.5mu\rlap{$[0.63\mathord{,}1.34]^{*}$}\phantom{[0.00\mathord{,}0.00]^{*}}$ & $\phantom{0}2.12\mkern2.5mu\rlap{$[1.67\mathord{,}2.51]^{*}$}\phantom{[0.00\mathord{,}00.00]^{*}}$ & $\phantom{0}3.90\mkern2.5mu\rlap{$[3.41\mathord{,}4.23]^{*}$}\phantom{[00.00\mathord{,}00.00]^{*}}$ & $\phantom{0}3.80\mkern2.5mu\rlap{$[3.31\mathord{,}4.14]^{*}$}\phantom{[00.00\mathord{,}00.00]^{*}}$ \\
 & 10 & $\phantom{\mathord{-}}0.58\mkern2.5mu\rlap{$[0.26\mathord{,}1.10]^{*}$}\phantom{[\mathord{-}0.00\mathord{,}0.00]^{*}}$ & $\phantom{0}4.84\mkern2.5mu\rlap{$[4.39\mathord{,}5.27]^{*}$}\phantom{[00.00\mathord{,}00.00]^{*}}$ & $1.80\mkern2.5mu\rlap{$[1.48\mathord{,}2.13]^{*}$}\phantom{[0.00\mathord{,}0.00]^{*}}$ & $1.42\mkern2.5mu\rlap{$[1.13\mathord{,}1.69]^{*}$}\phantom{[0.00\mathord{,}0.00]^{*}}$ & $\phantom{0}2.21\mkern2.5mu\rlap{$[1.82\mathord{,}2.60]^{*}$}\phantom{[0.00\mathord{,}00.00]^{*}}$ & $\phantom{0}5.28\mkern2.5mu\rlap{$[4.83\mathord{,}5.73]^{*}$}\phantom{[00.00\mathord{,}00.00]^{*}}$ & $\phantom{0}5.08\mkern2.5mu\rlap{$[4.62\mathord{,}5.54]^{*}$}\phantom{[00.00\mathord{,}00.00]^{*}}$ \\
 & 20 & $\phantom{\mathord{-}}0.66\mkern2.5mu\rlap{$[0.42\mathord{,}0.98]^{*}$}\phantom{[\mathord{-}0.00\mathord{,}0.00]^{*}}$ & $\phantom{0}5.45\mkern2.5mu\rlap{$[4.95\mathord{,}5.91]^{*}$}\phantom{[00.00\mathord{,}00.00]^{*}}$ & $1.97\mkern2.5mu\rlap{$[1.64\mathord{,}2.27]^{*}$}\phantom{[0.00\mathord{,}0.00]^{*}}$ & $1.48\mkern2.5mu\rlap{$[1.23\mathord{,}1.72]^{*}$}\phantom{[0.00\mathord{,}0.00]^{*}}$ & $\phantom{0}2.56\mkern2.5mu\rlap{$[2.22\mathord{,}2.88]^{*}$}\phantom{[0.00\mathord{,}00.00]^{*}}$ & $\phantom{0}6.28\mkern2.5mu\rlap{$[5.76\mathord{,}6.70]^{*}$}\phantom{[00.00\mathord{,}00.00]^{*}}$ & $\phantom{0}5.83\mkern2.5mu\rlap{$[5.32\mathord{,}6.26]^{*}$}\phantom{[00.00\mathord{,}00.00]^{*}}$ \\
 & 40 & $\phantom{\mathord{-}}0.63\mkern2.5mu\rlap{$[0.40\mathord{,}0.90]^{*}$}\phantom{[\mathord{-}0.00\mathord{,}0.00]^{*}}$ & $\phantom{0}5.74\mkern2.5mu\rlap{$[5.28\mathord{,}6.18]^{*}$}\phantom{[00.00\mathord{,}00.00]^{*}}$ & $2.01\mkern2.5mu\rlap{$[1.73\mathord{,}2.31]^{*}$}\phantom{[0.00\mathord{,}0.00]^{*}}$ & $1.67\mkern2.5mu\rlap{$[1.44\mathord{,}1.88]^{*}$}\phantom{[0.00\mathord{,}0.00]^{*}}$ & $\phantom{0}2.66\mkern2.5mu\rlap{$[2.33\mathord{,}3.01]^{*}$}\phantom{[0.00\mathord{,}00.00]^{*}}$ & $\phantom{0}7.19\mkern2.5mu\rlap{$[6.73\mathord{,}7.61]^{*}$}\phantom{[00.00\mathord{,}00.00]^{*}}$ & $\phantom{0}6.35\mkern2.5mu\rlap{$[5.89\mathord{,}6.78]^{*}$}\phantom{[00.00\mathord{,}00.00]^{*}}$ \\
\addlinespace[3pt]
CKW & 5 & $\phantom{\mathord{-}}1.20\mkern2.5mu\rlap{$[\mathord{-}0.31\mathord{,}2.08]^{\phantom{*}}$}\phantom{[\mathord{-}0.00\mathord{,}0.00]^{*}}$ & $23.30\mkern2.5mu\rlap{$[22.52\mathord{,}24.12]^{*}$}\phantom{[00.00\mathord{,}00.00]^{*}}$ & $5.58\mkern2.5mu\rlap{$[4.39\mathord{,}6.41]^{*}$}\phantom{[0.00\mathord{,}0.00]^{*}}$ & $1.84\mkern2.5mu\rlap{$[1.17\mathord{,}3.59]^{*}$}\phantom{[0.00\mathord{,}0.00]^{*}}$ & $\phantom{0}9.94\mkern2.5mu\rlap{$[8.63\mathord{,}11.03]^{*}$}\phantom{[0.00\mathord{,}00.00]^{*}}$ & $23.36\mkern2.5mu\rlap{$[22.57\mathord{,}24.19]^{*}$}\phantom{[00.00\mathord{,}00.00]^{*}}$ & $23.06\mkern2.5mu\rlap{$[22.32\mathord{,}23.87]^{*}$}\phantom{[00.00\mathord{,}00.00]^{*}}$ \\
 & 10 & $\phantom{\mathord{-}}1.12\mkern2.5mu\rlap{$[\mathord{-}0.14\mathord{,}2.34]^{\phantom{*}}$}\phantom{[\mathord{-}0.00\mathord{,}0.00]^{*}}$ & $28.01\mkern2.5mu\rlap{$[27.25\mathord{,}28.76]^{*}$}\phantom{[00.00\mathord{,}00.00]^{*}}$ & $5.43\mkern2.5mu\rlap{$[4.62\mathord{,}6.42]^{*}$}\phantom{[0.00\mathord{,}0.00]^{*}}$ & $3.48\mkern2.5mu\rlap{$[2.36\mathord{,}4.82]^{*}$}\phantom{[0.00\mathord{,}0.00]^{*}}$ & $10.28\mkern2.5mu\rlap{$[9.05\mathord{,}12.12]^{*}$}\phantom{[0.00\mathord{,}00.00]^{*}}$ & $28.24\mkern2.5mu\rlap{$[27.47\mathord{,}29.01]^{*}$}\phantom{[00.00\mathord{,}00.00]^{*}}$ & $27.58\mkern2.5mu\rlap{$[26.80\mathord{,}28.36]^{*}$}\phantom{[00.00\mathord{,}00.00]^{*}}$ \\
 & 20 & $\mathord{-}0.14\mkern2.5mu\rlap{$[\mathord{-}0.77\mathord{,}0.45]^{\phantom{*}}$}\phantom{[\mathord{-}0.00\mathord{,}0.00]^{*}}$ & $29.79\mkern2.5mu\rlap{$[29.16\mathord{,}30.48]^{*}$}\phantom{[00.00\mathord{,}00.00]^{*}}$ & $5.27\mkern2.5mu\rlap{$[4.54\mathord{,}5.99]^{*}$}\phantom{[0.00\mathord{,}0.00]^{*}}$ & $3.11\mkern2.5mu\rlap{$[2.48\mathord{,}3.79]^{*}$}\phantom{[0.00\mathord{,}0.00]^{*}}$ & $10.39\mkern2.5mu\rlap{$[9.48\mathord{,}11.24]^{*}$}\phantom{[0.00\mathord{,}00.00]^{*}}$ & $30.54\mkern2.5mu\rlap{$[29.85\mathord{,}31.24]^{*}$}\phantom{[00.00\mathord{,}00.00]^{*}}$ & $29.42\mkern2.5mu\rlap{$[28.78\mathord{,}30.13]^{*}$}\phantom{[00.00\mathord{,}00.00]^{*}}$ \\
 & 40 & $\phantom{\mathord{-}}0.17\mkern2.5mu\rlap{$[\mathord{-}0.27\mathord{,}0.57]^{\phantom{*}}$}\phantom{[\mathord{-}0.00\mathord{,}0.00]^{*}}$ & $30.61\mkern2.5mu\rlap{$[30.08\mathord{,}31.19]^{*}$}\phantom{[00.00\mathord{,}00.00]^{*}}$ & $4.87\mkern2.5mu\rlap{$[4.27\mathord{,}5.48]^{*}$}\phantom{[0.00\mathord{,}0.00]^{*}}$ & $3.35\mkern2.5mu\rlap{$[2.91\mathord{,}3.82]^{*}$}\phantom{[0.00\mathord{,}0.00]^{*}}$ & $\phantom{0}9.72\mkern2.5mu\rlap{$[8.80\mathord{,}10.55]^{*}$}\phantom{[0.00\mathord{,}00.00]^{*}}$ & $32.57\mkern2.5mu\rlap{$[31.98\mathord{,}33.27]^{*}$}\phantom{[00.00\mathord{,}00.00]^{*}}$ & $30.54\mkern2.5mu\rlap{$[29.88\mathord{,}31.15]^{*}$}\phantom{[00.00\mathord{,}00.00]^{*}}$ \\
\addlinespace[3pt]
CAMSL & 5 & $\mathord{-}0.05\mkern2.5mu\rlap{$[\mathord{-}0.33\mathord{,}0.36]^{\phantom{*}}$}\phantom{[\mathord{-}0.00\mathord{,}0.00]^{*}}$ & $\phantom{0}6.55\mkern2.5mu\rlap{$[6.00\mathord{,}7.06]^{*}$}\phantom{[00.00\mathord{,}00.00]^{*}}$ & $3.42\mkern2.5mu\rlap{$[3.09\mathord{,}3.76]^{*}$}\phantom{[0.00\mathord{,}0.00]^{*}}$ & $1.79\mkern2.5mu\rlap{$[1.44\mathord{,}2.14]^{*}$}\phantom{[0.00\mathord{,}0.00]^{*}}$ & $\phantom{0}3.53\mkern2.5mu\rlap{$[3.10\mathord{,}3.91]^{*}$}\phantom{[0.00\mathord{,}00.00]^{*}}$ & $\phantom{0}7.02\mkern2.5mu\rlap{$[6.46\mathord{,}7.55]^{*}$}\phantom{[00.00\mathord{,}00.00]^{*}}$ & $\phantom{0}6.64\mkern2.5mu\rlap{$[6.03\mathord{,}7.13]^{*}$}\phantom{[00.00\mathord{,}00.00]^{*}}$ \\
 & 10 & $\phantom{\mathord{-}}0.32\mkern2.5mu\rlap{$[\mathord{-}0.04\mathord{,}0.71]^{\phantom{*}}$}\phantom{[\mathord{-}0.00\mathord{,}0.00]^{*}}$ & $\phantom{0}8.19\mkern2.5mu\rlap{$[7.51\mathord{,}8.88]^{*}$}\phantom{[00.00\mathord{,}00.00]^{*}}$ & $3.81\mkern2.5mu\rlap{$[3.34\mathord{,}4.35]^{*}$}\phantom{[0.00\mathord{,}0.00]^{*}}$ & $1.70\mkern2.5mu\rlap{$[1.32\mathord{,}2.13]^{*}$}\phantom{[0.00\mathord{,}0.00]^{*}}$ & $\phantom{0}4.28\mkern2.5mu\rlap{$[3.76\mathord{,}4.87]^{*}$}\phantom{[0.00\mathord{,}00.00]^{*}}$ & $\phantom{0}9.10\mkern2.5mu\rlap{$[8.43\mathord{,}9.81]^{*}$}\phantom{[00.00\mathord{,}00.00]^{*}}$ & $\phantom{0}8.26\mkern2.5mu\rlap{$[7.61\mathord{,}8.94]^{*}$}\phantom{[00.00\mathord{,}00.00]^{*}}$ \\
 & 20 & $\phantom{\mathord{-}}0.65\mkern2.5mu\rlap{$[0.33\mathord{,}0.98]^{*}$}\phantom{[\mathord{-}0.00\mathord{,}0.00]^{*}}$ & $\phantom{0}8.98\mkern2.5mu\rlap{$[8.28\mathord{,}9.62]^{*}$}\phantom{[00.00\mathord{,}00.00]^{*}}$ & $3.97\mkern2.5mu\rlap{$[3.56\mathord{,}4.44]^{*}$}\phantom{[0.00\mathord{,}0.00]^{*}}$ & $1.79\mkern2.5mu\rlap{$[1.50\mathord{,}2.09]^{*}$}\phantom{[0.00\mathord{,}0.00]^{*}}$ & $\phantom{0}4.52\mkern2.5mu\rlap{$[4.02\mathord{,}4.95]^{*}$}\phantom{[0.00\mathord{,}00.00]^{*}}$ & $10.55\mkern2.5mu\rlap{$[9.82\mathord{,}11.19]^{*}$}\phantom{[00.00\mathord{,}00.00]^{*}}$ & $\phantom{0}9.15\mkern2.5mu\rlap{$[8.45\mathord{,}9.80]^{*}$}\phantom{[00.00\mathord{,}00.00]^{*}}$ \\
 & 40 & $\phantom{\mathord{-}}0.51\mkern2.5mu\rlap{$[0.27\mathord{,}0.81]^{*}$}\phantom{[\mathord{-}0.00\mathord{,}0.00]^{*}}$ & $\phantom{0}9.00\mkern2.5mu\rlap{$[8.32\mathord{,}9.56]^{*}$}\phantom{[00.00\mathord{,}00.00]^{*}}$ & $3.55\mkern2.5mu\rlap{$[3.20\mathord{,}3.90]^{*}$}\phantom{[0.00\mathord{,}0.00]^{*}}$ & $2.09\mkern2.5mu\rlap{$[1.83\mathord{,}2.34]^{*}$}\phantom{[0.00\mathord{,}0.00]^{*}}$ & $\phantom{0}4.16\mkern2.5mu\rlap{$[3.76\mathord{,}4.58]^{*}$}\phantom{[0.00\mathord{,}00.00]^{*}}$ & $11.58\mkern2.5mu\rlap{$[10.89\mathord{,}12.18]^{*}$}\phantom{[00.00\mathord{,}00.00]^{*}}$ & $\phantom{0}9.41\mkern2.5mu\rlap{$[8.76\mathord{,}10.05]^{*}$}\phantom{[00.00\mathord{,}00.00]^{*}}$ \\
\bottomrule
\end{tabular*}
\endgroup
\tablenotes{Results summarize 200 paired resampling replicates with complete refitting. Entries give mean test weighted-CF differences in percentage points. Brackets show pointwise 95\% robustness intervals, and asterisks mark intervals excluding zero before rounding.}
\end{table*}

Fig.~\ref{fig:cf_performance} shows that P+R generally achieves higher test weighted CF than the standalone clustering methods across the full cohorts. Among these methods, K-modes generally performs best, followed by Z and DTW. These results suggest that the coincidence improvements achieved during fitting also persist on later observations.

The paired resampling results further show that P+R's advantage over methods without coincidence-oriented refinement persists across changes in customer composition and sampled days. All 96 pointwise 95\% robustness intervals for paired test differences against Null, Z, K-modes, DTW, Spearman, and Quantile lie entirely above zero. Mean paired gains range from 3.68 to 30.61 pp against Null and from 1.08 to 3.48 pp against K-modes.

Finally, to examine whether the advantage of P over random initialization observed during training persists on unseen days, we compare P+R with MS+R. P+R achieves a higher mean test weighted CF than MS+R in fourteen of the sixteen settings, with seven robustness intervals entirely above zero.\footnote{Under the same paired evaluation, refinement improves test weighted CF for both Z and K-modes in all sixteen settings, with pointwise 95\% robustness intervals above zero. Intervals for P+R $-$ K-modes+R all contain zero, while those for P+R $-$ Z+R contain zero in twelve settings and favor P+R in four.} These findings indicate that R accounts for much of the improvement in out-of-time coincidence, while P provides a modest additional benefit over random starts in several settings. We therefore retain P, an easy-to-implement and interpretable initialization, as the default initializer.

\subsection{Cluster Composition}
\label{sec:results_composition}

Table~\ref{tab:cluster_composition} compares group composition and refinement-induced membership changes across methods. In Panel A, we summarize group-size balance by the effective group count $K_{\mathrm{eff}}=N^{2}/(\sum_k|C_k|^{2})$, which equals $K$ when all groups have the same size and approaches one as a single group dominates.

\begin{table}[!t]
\centering
\caption{Group composition and refinement-induced membership changes}
\label{tab:cluster_composition}
\begingroup\tablebodyformat
\begin{tabular*}{\columnwidth}{@{\extracolsep{\fill}}l@{\hspace{4pt}}c@{\hspace{4pt}}c@{\hspace{4pt}}c@{\hspace{4pt}}c@{}}
\toprule
Method / diagnostic & GoiEner & SGSC & CKW & CAMSL \\
\midrule
\multicolumn{5}{l}{\textit{A. $K=20$: effective $K$ / largest peak-demand share (\%)}} \\
P & $12.31\;/\;13.9$ & $14.23\;/\;12$ & $7.05\;/\;18.8$ & $14.43\;/\;12.5$ \\
P+R & $16.00\;/\;8.2$ & $16.11\;/\;10$ & $10.89\;/\;10$ & $16.07\;/\;9.5$ \\
Z & $12.56\;/\;10.9$ & $14.89\;/\;10.6$ & $7.06\;/\;17.1$ & $12.38\;/\;12.9$ \\
Spearman & $1.00\;/\;100$ & $1.01\;/\;99.7$ & $1.01\;/\;100$ & $1.01\;/\;100$ \\
Quantile & $1.55\;/\;87.1$ & $5.42\;/\;27.6$ & $3.08\;/\;48.1$ & $5.41\;/\;45.5$ \\
DTW & $13.38\;/\;9$ & $14.15\;/\;12.3$ & $8.84\;/\;12.5$ & $12.10\;/\;13.9$ \\
K-modes & $15.02\;/\;9.5$ & $14.61\;/\;11.6$ & $17.63\;/\;10.4$ & $17.21\;/\;9.8$ \\
MS+R & $17.89\;/\;7.8$ & $16.20\;/\;11.4$ & $11.66\;/\;10.9$ & $16.91\;/\;8.7$ \\
\bottomrule
\end{tabular*}
\smallskip
\begin{tabular*}{\columnwidth}{@{\extracolsep{\fill}}l@{\hspace{4pt}}c@{\hspace{4pt}}c@{\hspace{4pt}}c@{}}
\multicolumn{4}{l}{\textit{B. P+R customer counts: minimum / median / maximum}} \\
\midrule
Dataset & $K=10$ & $K=20$ & $K=40$ \\
GoiEner & $128\;/\;1429.5\;/\;1882$ & $71\;/\;655.5\;/\;1038$ & $25\;/\;325.5\;/\;743$ \\
SGSC & $196\;/\;530\;/\;1066$ & $90\;/\;293\;/\;585$ & $25\;/\;118\;/\;360$ \\
CKW & $108\;/\;324\;/\;1317$ & $24\;/\;158.5\;/\;830$ & $9\;/\;84.5\;/\;345$ \\
CAMSL & $194\;/\;426\;/\;580$ & $47\;/\;200\;/\;405$ & $15\;/\;94.5\;/\;247$ \\
\bottomrule
\end{tabular*}
\smallskip
\begin{tabular*}{\columnwidth}{@{\extracolsep{\fill}}l@{\hspace{4pt}}c@{\hspace{4pt}}c@{\hspace{4pt}}c@{\hspace{4pt}}c@{}}
\multicolumn{5}{l}{\textit{C. $K=10$: membership change after optimal label matching}} \\
\midrule
Method & GoiEner & SGSC & CKW & CAMSL \\
P to P+R & $52.7\%$ & $52.3\%$ & $44.9\%$ & $49.1\%$ \\
\bottomrule
\end{tabular*}
\endgroup
\tablenotes{Results use full-cohort partitions fitted on training data. In Panel~A, peak share is the largest mean group share of summed individual peak demand during testing. Panel~C measures membership changes after matching group labels.}
\end{table}

Table~\ref{tab:cluster_composition} shows that refinement produces less concentrated groups than the initial P clustering. At $K=20$, the effective group count increases from 7.05--14.43 under P to 10.89--16.11 under P+R, while the largest mean test-period share of summed individual peak demand decreases from 12.0--18.8\% to 8.2--10.0\% across the four datasets. Although the minimum group size is one, none of the reported P+R partitions contains singletons. Even at $K=40$, the smallest groups contain 9--25 customers. MS+R produces similarly balanced groups, suggesting that balance alone does not explain P+R's coincidence advantage.\footnote{In contrast, Spearman and Quantile produce highly concentrated partitions at $K=20$, with effective group counts well below the prescribed $K$, which may partly explain their lower weighted CF.} Refinement also substantially changes customer memberships. After matching group labels to maximize agreement, 44.9--52.7\% of customers across the four datasets are assigned to a different group under P+R than under P at $K=10$.

\subsection{Peak-Time Persistence}
\label{sec:persistence}

While weighted CF measures coincidence within groups, it does not show when the customers in each group typically peak, or whether those patterns persist over time. Therefore, in Fig.~\ref{fig:peak_groups} we describe the average peak-time distribution of each P+R group at $K=10$ during both training and testing.

\begin{figure}[!t]
\centering
\includegraphics[width=\columnwidth]{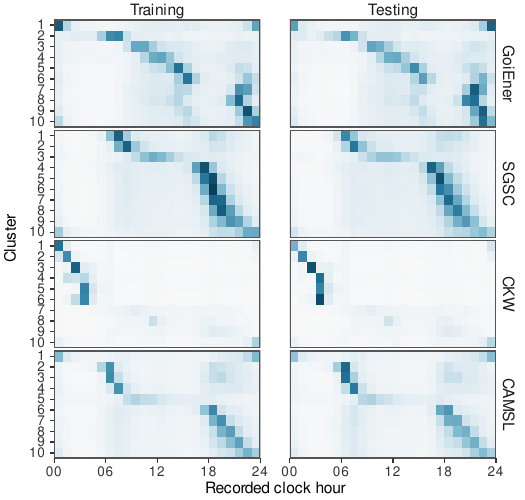}
\caption{Training and test peak-time patterns of P+R groups ($K=10$).}
\label{fig:peak_groups}
\figurenotes{Cells show group averages of customers' estimated probabilities of peaking during the indicated recorded-clock hour. Darker shades indicate higher values. Memberships and row order are fixed from training. Color scales are shared within datasets but differ across datasets.}
\end{figure}

The groups exhibit distinct peak-time patterns, including overnight concentrations in CKW and contrasting morning- and evening-oriented groups in SGSC and CAMSL. Despite some variation in timing and concentration, these patterns remain largely recognizable during testing. Because memberships and group order are fixed from the training period, the test panels are not the result of reclustering; their resemblance to the training panels indicates persistence in the groups' average peak-time patterns.

\subsection{Sensitivity to Minimum Cluster Size}
\label{sec:results_minimum_size}

Finally, we examine how minimum group-size requirements affect the coincidence achieved by P+R. Note that, although these requirements can support meaningful aggregation units, they also restrict the partitions available for optimization. To assess this trade-off, we impose the same minimum size $L_k=L$ on every group across the four full cohorts at $K\in\{5,10,20,40\}$. We compare the baseline $L=1$ with minimum sizes corresponding to 10\%, 25\%, 50\%, 75\%, and 100\% of the average group size $N/K$, rounded down to at least one customer. These fractions are denoted by $\alpha\in\{0.10,0.25,0.50,0.75,1.00\}$. Fig.~\ref{fig:minimum_size} shows the changes in training weighted CF and test weighted CF relative to $L=1$.\footnote{For $K\in\{20,40\}$ and $L_k=\lfloor N/K\rfloor$, refinement took 33.4--2912.8~s and converged after 13--49 accepted updates on the same machine using six threads, illustrating the increased assignment cost when $M$ approaches $N$.}

\begin{figure}[!t]
\centering
\includegraphics[width=\columnwidth]{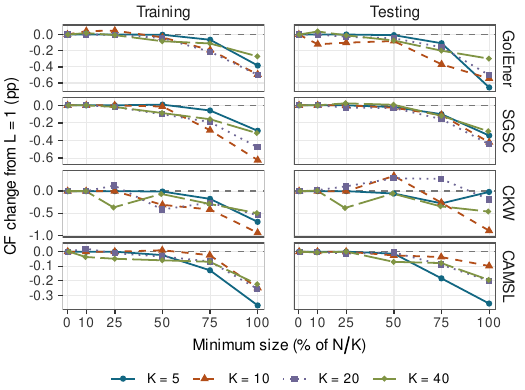}
\caption{Changes in weighted CF with minimum group size.}
\label{fig:minimum_size}
\figurenotes{Changes are relative to $L=1$, with minimum sizes expressed as percentages of $N/K$. The same 30 P initialization seeds are reused across minimum-size settings for each dataset and $K$. All partitions are fitted on training data, with memberships fixed during testing.}
\end{figure}

Fig.~\ref{fig:minimum_size} shows that moderate minimum-size requirements have little effect on out-of-time coincidence. At $\alpha=0.50$, the largest reduction in test weighted CF across the sixteen dataset--$K$ settings is only 0.081 pp. In some settings, test performance even improves despite a lower training weighted CF, suggesting a possible regularizing effect---although this behavior is not consistent.

As the minimum size approaches the average group size, the constraints become more restrictive. At $\alpha=1.00$, group sizes are nearly equal, and both training weighted CF and test weighted CF decrease in all sixteen settings. Nevertheless, test losses remain between 0.018 and 0.893 pp, with a median of 0.350 pp. Notably, even under these constraints, P+R achieves a higher test weighted CF than K-modes with $L=1$ in all sixteen settings, with gains of 0.200--2.772 pp. Thus, nearly equal group sizes can be imposed at a modest cost in out-of-time coincidence while retaining an advantage over the strongest standalone benchmark.

\section{Conclusion}
\label{sec:conclusions}

Peak-demand coincidence within customer groups depends on the temporal alignment of individual loads, which conventional similarity-based clustering does not directly optimize. We address this gap with a peak-demand-weighted CF objective and a fixed-peak refinement procedure (R) that solves each reassignment step exactly and strictly increases weighted CF at every accepted update. Initialized with peak-time-distribution clustering (P), refinement improves weighted CF during fitting in every setting and makes the outcome less dependent on the starting partition. On later observations, the resulting P+R method outperforms all standalone benchmarks on four public smart-meter panels, with mean paired gains of 1.08--3.48 pp over the strongest. Refinement from random starts achieves most of this improvement, so R accounts for the bulk of the gain, while P adds a modest further benefit. The refined groups also retain recognizable peak-time patterns. Moreover, requiring nearly equal group sizes lowers test weighted CF by less than one pp in every setting.

Beyond these empirical findings, the resulting coincidence-oriented groups offer potential applications in distribution planning and demand-side programs. In planning, they could serve as coincidence groups for feeder and transformer service areas. Combined with network connectivity, demand magnitudes, and information on the temporal overlap of group loads, the composition and coincidence factors of these groups could inform assessments of peak-demand diversity and estimates of asset peak demand. In demand response, recurring peak-time patterns, combined with information on demand-side flexibility, could help identify which customer groups to target and during which periods. Future work could evaluate these applications by testing whether coincidence-based aggregation improves peak-demand estimates in realistic networks and by incorporating the groups into network-aware flexibility studies.

\bibliographystyle{IEEEtran}
\bibliography{Refs_revised}
\end{document}